\author{Xuchao Zhang{$^\dag$}, Shuo Lei{$^\dag$}, Liang Zhao{$^\ddag$}, Arnold P. Boedihardjo{$^\S$}, Chang-Tien Lu{$^\dag$} \\
	{$^\dag$}Virginia Tech, Falls Church, VA, USA\\
	{$^\ddag$}George Mason University, Fairfax, VA, USA\\
	{$^\S$}U. S. Army Corps of Engineers, Alexandria, VA, USA \\
	{$^\dag$}\{xuczhang, slei, ctlu\}@vt.edu, {$^\ddag$}lzhao9@gmu.edu, {$^\S$}arnold.p.boedihardjo@usace.army.mil}
\DeclareMathOperator*{\argminA}{arg\,min}
\DeclareMathOperator*{\argmaxA}{arg\,max}
\DeclareMathOperator*{\supp}{supp}
\newcommand{\argmin}{\arg\min}
\DeclarePairedDelimiter\ceil{\lceil}{\rceil}
\DeclarePairedDelimiter\abs{\lvert}{\rvert}%
\DeclarePairedDelimiter\norm{\lVert}{\rVert}%
\newcommand\nth{\textsuperscript{th}\xspace}
\newtheorem{theorem}{Theorem}
\newtheorem{definition}{Definition}
\newtheorem{lemma}[theorem]{Lemma}
\newcolumntype{Y}{>{\centering\arraybackslash}X}
\begin{document}
%
\title{Robust Regression in Online Feature Selection}

\title{RoOFS: Robust Regression via Online Feature Selection}
\title{Robust Regression via Online Feature Selection under Adversarial Data Corruption}



\maketitle

\begin{abstract}

The presence of data corruption in user-generated streaming data, such as social media, motivates a new fundamental problem that learns reliable regression coefficient when features are not accessible entirely at one time. Until now, several important challenges still cannot be handled concurrently: 1) corrupted data estimation when only partial features are accessible; 2) online feature selection when data contains adversarial corruption; and 3) scaling to a massive dataset. This paper proposes a novel RObust regression algorithm via Online Feature Selection (\textit{RoOFS}) that concurrently addresses all the above challenges. Specifically, the algorithm iteratively updates the regression coefficients and the uncorrupted set via a robust online feature substitution method. We also prove that our algorithm has a restricted error bound compared to the optimal solution. Extensive empirical experiments in both synthetic and real-world data sets demonstrated that the effectiveness of our new method is superior to that of existing methods in the recovery of both feature selection and regression coefficients, with very competitive efficiency.
	
\end{abstract}

\section{Introduction} \label{section:introduction}
The presence of noise and data corruption in real-world data can be inevitably caused by various reasons such as experimental errors, accidental outliers, or even adversarial data attacks. 
In traditional robust regression problem, reliable regression coefficients are learned in the presence of adversarial data corruptions in its response vector. A commonly adopted model from existing methods assumes that the observed response is obtained from the generative model $\bm y = X^T \bm \beta^* + \bm u$, where $\bm \beta^*$ is the true regression coefficients we wish to recover and $\bm u$ is the corruption vector with adversarial values. In the problem setting, the data matrix $X$ is assumed to contain all the features that can be accessed at any time and by arbitrarily many times. 

Existing robust learning methods typically focus on modeling the entire dataset with all the features at once; however, they may meet the bottleneck in terms of computation and memory as more and more data sets are becoming 
However, the assumption is no longer suitable to the following scenarios in the applications that contain exponentially increasing user-generated contents:
1) \textit{features are too many to be loaded entirely}. Features grows dramatically fast and becomes extremely large in. For instance, over 4.7 million movies and televisions with 8.3 million reviews in IMDb\footnote{https://www.imdb.com/} online movie and television review website, which makes it hard to load all the features entirely for any machine learning models using the movies as features. 
2) \textit{features are generated dynamically}. For example, people create and use new terms and hashtags all the time in Twitter, and the "Likes"~\cite{naylor2012beyond} on newly-generated articles in Facebook can be considered as new features describing the interestingness of the user.
3) 
Therefore, it is necessary to address online features in traditional robust regression as a new fundamental problem; however, current methods either focus on robust regression or online feature learning separately.



To the best of our knowledge, our proposed approach is the first robust regression algorithm that can handle the online features with adversarial data corruptions.
It is nontrivial to consider online features and adversarial data corruption simultaneously in robust regression because 1) robust methods usually estimate data corruption based on the entire data, but online features make the data can only be partially accessible at one time; and 2) online feature selection methods can only select features based on uncorrupted data. Simply using robust regression and online feature selection methods sequentially makes the recovery result of coefficients worse, which is presented in our experiments in Section \ref{section:experiment}.
To address the above challenges, we proposed a new robust regression algorithm via online feature selection (\textit{RoOFS}). The main contributions of our study are summarized as follows:
\begin{itemize}
	\item \textit{design of an efficient algorithm to simultaneously address the problem of data corruption and online feature}. The algorithm \textit{RoOFS} is proposed to recover the regression coefficients and uncorrupted set efficiently. Unlike using entire features, our approach alternately estimates the data corruption and selects the feature set via a robust online feature substitution method.
	\item \textit{theoretical analysis of the algorithm}. We prove that our method yields a solution with a restricted error bound compared to ground truth coefficients under the Subset Restricted Strong Convexity (\textit{SRSC}) property.
	\item \textit{demonstration of empirical effectiveness and efficiency}. Our proposed algorithm was evaluated with 6 competing methods in both robust regression and online feature selection literatures. The results showed that our approach consistently outperforms existing methods in coefficients recovery and uncorrupted set estimation, delivering a competitive running time.
\end{itemize}

The reminder of this paper is organized as follows. Section \ref{section:related_work} reviews the related work in robust regression model and online feature selection categories. Section \ref{section:problem_setting} gives a formal problem formulation. The proposed \textit{RoOFS} algorithm is presented in Section \ref{section:methodology}. Section \ref{section:analysis} presents the theoretical analysis of proposed algorithm. In Section \ref{section:experiment}, the experimental results are analyzed and the paper concludes with a summary of our work in Section \ref{section:conclusion}.

\section{Related Work} \label{section:related_work}
The work related to this paper is summarized in the categories of robust regression model and online feature selection as below.
\subsection{Robust Regression Model}
A large body of literature on robust regression problem has been established over the last few decades. Most of studies focus on handling stochastic noise in small amounts \cite{loh2011high}; however, these methods cannot be applied to data that may exhibit malicious corruption \cite{icml2013_chen13h}. To recover regression coefficients with adversarial data corruption, Chen et al. \cite{icml2013_chen13h} proposed a robust algorithm based on trimmed inner product. McWilliams et al. \cite{mcwilliams2014fast} proposed a sub-sampling algorithm for large-scale corrupted linear regression, but their theoretical recovery boundaries are not close to the ground truth \cite{bhatia2015robust}. Some $L_1$ penalty based methods \cite{Wright:2010:DEC:1840493.1840533,nguyen2013exact} pursue strong recovery results for robust regression problem, but these methods depend on severe restrictions of the data distribution such as row-sampling from an incoherent orthogonal matrix \cite{nguyen2013exact}. Zhang et al. \cite{8215535} proposed a distributed robust algorithm to handle the large-scale data set under adversarial data corruption.

Most research in this area requires the corruption ratio parameter, which is difficult to estimate under the assumption that the dataset can be adversarially attacked. For instance, She and Owen \cite{10.2307/41416397} rely on a regularization parameter to determine the size of the uncorrupted set based on soft-thresholding. Chen et al. \cite{icml2013_chen13h} require the upper bound of the outliers number, which is also difficult to estimate when the data contain the adversarial data corruption. Bhatia et al. \cite{bhatia2015robust} proposed a hard-thresholding algorithm with a strong guarantee of coefficient recovery under mild assumption on input data. However, the corruption ratio parameter is required by the algorithm and its recovery error can be more than doubled in size if the parameter is far from the true value. Recently, Zhang et al. \cite{ijcai2017-480} proposed a heuristic hard-thresholding based methods that learns the optimal uncorrupted set. However, all these approaches are based on batch feature selection under the assumption that all features can be accessed entirely at any time, which is infeasible to apply in massive and fast growing feature set.

\subsection{Online Feature Selection}
Online feature selection methods \cite{jiang2006similarity,wang2014online,Yu:2016:SAO:3017677.2976744} relaxes the requirement of batch selection and fit the scenarios that feature cannot be accessed entirely at one time. Statistical online feature selection algorithms \cite{zhou2006streamwise,wu2010online,yu2014towards} select features via certain statistical quantity such as mutual information, but these methods lack of specific objectives and usually have sub-optimal solutions for some certain tasks. Optimization based approaches \cite{perkins2003online,zhu2010grafting} use target oriented objective functions solved by some specific optimization techniques. These methods usually require the regression coefficient $\bm \beta$ be sparse, i.e., $\norm{\bm \beta}_0 \le \mu$. Grafting \cite{Perkins:2003:OFS:3041838.3041913} and its variation \cite{zhu2010grafting} relax the hard constraint of feature set into $L_1$ penalty, which makes it a convex problem. However, the parameter of $L_1$ norm \cite{ryali2009feature} is difficult to determine because the usual cross validation strategy is unavailable for the online feature selection scenario \cite{wang2015online}. Yang et al. \cite{Yang:2016:OFS:2939672.2939881} proposed a limited-memory substitution algorithm based on the $L_0$ norm constraint. Although the hard constraint leads to an NP-hard problem, a theoretical guarantee for the error bound of their local optimal solution is provided. However, none of these online feature methods can handle the adversarial data corruption.

\section{Problem Formulation}\label{section:problem_setting}
In this study, we consider the problem of robust regression with adversarial data corruption in the feature selection scenario in which only a few features are accessible at each time. Given data matrix $X_t\in \mathbbm{R}^{{p_t}\times n}$ where $p_t$ is the number of features available in the $t\nth$ time interval, and $n$ are the number of data samples. The data matrix for all the time intervals is represented as $X = \{X_t\}_{t=1}^{\mathcal T}$. We assume the corresponding response vector $\bm y \in \mathbbm{R}^{n \times 1}$ is generated using the following model:
\begin{equation} \label{eq:model}
\bm y = X^T \boldsymbol \beta^* + \bm u + \bm \varepsilon
\end{equation}
where $\bm \beta^*$ represents the $\mu$-sparse ground truth coefficients of the regression model i.e., $\norm{\bm \beta^*}_0 \le \mu$ and $\bm u$ is the unbounded corruption vector introduced by adversarial data attacks. $\bm \varepsilon \in \mathbbm{R}^{n\times 1}$ represents the additive dense noise, where $\varepsilon_i \sim \mathcal{N}(0, \sigma^2)$. Different from the corruption vector $\bm u$ that can be arbitrarily distributed, the dense noise $\varepsilon_i$ follows normal distribution with zero mean and a relatively small variance $\sigma$. The notations used in this paper is summarized in Table \ref{table:math_notation}.

\begin{table}[t]
	\caption{Math Notations}
	\centering
	\label{table:math_notation}
	\tabcolsep=0.15cm
	\scalebox{0.97}{
	\begin{tabular}{ l|l }
		\toprule
		Notations                                          & Explanations                                                                 \\ \hline
		$p, n \in \mathbbm{R}$ & number of entire features and data samples \\
		$p_t \in \mathbbm{R}$ & number of features in $t\nth$ time interval \\
		$\mu \in \mathbbm{R}$ & ratio of feature sparsity, where $\norm{\bm \beta}_0 = \mu$ \\
		$X_t \in \mathbbm{R}^{p_t\times n}$              & data samples containing features in the $t\nth$ time interval                        \\
		$X \in \mathbbm{R}^{p\times n}$              & data samples containing the entire features                        \\
		$\bm \beta, \bm \beta^* \in \mathbbm{R}^{p\times1}$       & estimated and ground truth regression coefficient                        \\
		$\bm u \in \mathbbm{R}^{n\times1}$           & corruption vector with adversarial values                                         \\
		$\bm \varepsilon \in \mathbbm{R}^{n\times1}$ & dense noise vector, where $\varepsilon_i \sim \mathcal{N}(0, \sigma^2)$                                      \\
		$\bm y \in \mathbbm{R}^{n\times1}$           & response vector, where $\bm y = X^T \boldsymbol \beta^* + \bm u + \bm \varepsilon$                                   \\
		$\bm r \in \mathbbm{R}^{n\times1}$           & residual vector, where $\bm r = \abs{\bm y - X^T\bm \beta}$                                              \\		
		$ S \subseteq [n]$                           & estimated uncorrupted set \\
		$ S_* \subseteq [n]$                         & ground truth uncorrupted set, where $S_*=\overline{supp(\bm u)}$ \\
		$\varPsi, \varPsi_* \subseteq [\mu]$ & estimated and ground truth feature set\\
		
		\bottomrule
	\end{tabular}
	}
\end{table}

The goal of our problem is to learn a new robust regression problem with online feature selection, which is to recover the regression coefficients $\bm \beta^*$ and simultaneously determine the uncorrupted point set $\hat{S}$ with sequentially accessible features. The problem is formally defined as follows:

\begin{equation} \label{eq:problem}
\begin{gathered}
\hat{\bm \beta}, \hat{S}= \argminA_{\bm \beta, S}\|\bm y_S - X_{S}^T \bm \beta\|_2^2\\
s.t.\ \ S\subset[n],\ \abs{S} \ge \mathcal{G}(\bm \beta),\ \norm{\bm \beta}_0 \le \mu
\end{gathered}
\end{equation}


Given a subset $S \subset [n]$, $\bm y_S$ restricts the row of $\bm y$ to indices in $S$ and $X_S$ signifies that the columns of $X$ are restricted to indices in $S$. Therefore, we have $\bm y_S \in \mathbbm{R}^{|S| \times 1}$ and $X_S \in \mathbbm{R}^{p \times |S|}$. We use the notation $S_*=\overline{supp(\bm u)}$ to denote the ground truth set of uncorrupted points. Also, for any vector $\bm v \in \mathbbm{R}^n$, the notation $\bm v_S$ represents the $|S|$-dimensional vector containing the components in $S$. The notation $\Psi = supp(\bm \beta)$ is used to represent the set of selected features, resulting in $\abs{\Psi} \le \mu$. Similarly, we use $X_{\Psi}$ to signify the rows of $X$ are restricted to indices in $\Psi$ and $X_{\Psi, S}$ to restrict both the rows and columns in set $\Psi$ and $S$.
The function $\mathcal{G}(\cdot)$ determines the size of uncorrupted data according to the regression coefficients $\bm \beta$, which is explained in Section \ref{section:methodology}. It is worth mentioning that the features of data matrix $X$ in Equation \eqref{eq:problem} cannot be loaded entirely, but they can be accessed partially for each time interval. Therefore, the joint optimization of $\bm \beta$ and $S$ in our problem are very challenging because it amounts to a non-convex discrete optimization problem under the assumption that data matrix $X$ cannot be access entirely at one time.

\section{The Proposed Methodology}\label{section:methodology}

To solve the problem in Equation \eqref{eq:problem} efficiently with the guarantee on the strong recovery of regression coefficients, we propose a novel robust regression algorithm with online feature selection, \textit{RoOFS}. The algorithm is only allowed to access part of features at each time, which are defined as the newly incoming features in our problem. One naive solution to handle the sequentially incoming features is to retain all the features in the memory and then apply traditional robust feature selection methods. However, the solution has two major drawbacks: 1) the feature set can be too large to be retained in the memory, and 2) the algorithm becomes slower and slower when the feature set increases. Therefore, we proposed a new ``robust online substitution" method to decide the retained feature set based on an adaptively estimated corrupted set.
The procedure of robust online substitution is defined as follows:
\begin{itemize}
	\item Update coefficients of retained features $\Psi$ based on the estimated uncorrupted set $S$ as follows: $\bm \beta_\Psi := $$\bm \beta_\Psi - \eta$ $X_{\Psi,S}^T$$(X_{\Psi,S}^T \bm \beta_{\Psi} - \bm y_{S})$, where $\eta$ is the step length.
	\item Retain the top $\mu$ largest (in magnitude) elements in $\bm \beta$ and set the rest to zero. Then all the non-zero features will be kept in the retained feature set $\Psi$.
	\item Compute the residual vector $\bm r \in \mathbbm{R}^{n \times 1}$ with the updated coefficients $\bm \beta$, then estimate the uncorrupted feature set $S$ via a thresholding operator $\mathcal{H}_{\tau}(\bm r)$, where $\tau$ is the estimated size of uncorrupted set.
\end{itemize}
The procedure will be repeatedly executed until the residual vector $\bm r$ converges. The thresholding operator $\mathcal{H}_{\tau}(\cdot)$ is formally defined as follows:

\begin{definition}[\textbf{Thresholding Operator}] \label{def:HTS}
	Defining $\varphi^{-1}_{\bm v}(i)$ as the position of the $i$\nth element in input vector $\bm v$'s ascending order of magnitude and $\tau$ as the threshold parameter, the thresholding operator of $\bm v$ is defined as
	\begin{equation}\label{eq:hht}
	\begin{aligned}
	\mathcal{H}_{\tau}(\bm v) = \{i \in [n]: \varphi^{-1}_{\bm v}(i) \le \tau \}
	\end{aligned}
	\end{equation}
\end{definition}

To estimate the uncorrupted set $S$, the thresholding operator $\mathcal{H}_{\tau}(\cdot)$ generally requires two inputs: residual vector $\bm r$ and the size of uncorrupted set $\tau$. The residual vector $\bm r$ can be computed with coefficients $\bm \beta$ as follows:
\begin{equation} \label{eq:residual}
\bm r = \abs{\bm y - X^T\bm \beta}
\end{equation}

For the size of uncorrupted set, two general cases are discussed. The first case is that the size can be estimated by users based on their prior knowledge on the data. For instance, if we know the data corruption happens rarely, then we can estimate the uncorrupted size as 95\% of the entire data. However, it is hard to obtain prior knowledge on the data in the real-world. Thus, in the second case, we propose a method to adaptively estimate the uncorrupted size based on the residual vector $\bm r$. 
The method follows an intuition that when the coefficient $\bm \beta$ is close to $\bm \beta^*$, the residuals of uncorrupted samples are smaller than those of corrupted samples in strong possiblity. The intuition can be explained by the generative model in Equation \eqref{eq:model}, where the corrupted samples have the residual $\bm r \approx \bm u + \bm \varepsilon$, but the residual of uncorrupted samples only contains the white noise $\bm \varepsilon$.

The estimation of uncorrupted size can be formalized to solve the following problem:
\begin{equation} \label{eq:tau_opt}
\begin{aligned}
\hat{\tau} :=& \argmaxA_{\ceil*{n/2} < \tau \le n} \tau
\ \ \ \ s.t.\ \ r_{\varphi(\tau)} \le \frac{2\tau r_{\varphi(\tau_o)}}{\tau_o}, \tau \in \mathbbm{Z}^+
\end{aligned}
\end{equation}
where $r_{\varphi(k)}$ represents the $k\nth$ elements of residual vector $\bm r$ in ascending order of magnitude. The variable $\tau_o$ in the constraint is defined as an intermediate variable whose $r_{\varphi(\tau_o)}^2$ has the closest value to $\frac{\norm{\bm r_{\mathcal{H}_{\tau'}(\bm r)}}_2^2}{\tau'}$, where $\tau'=\tau - \ceil*{n/2}$ and $\mathcal{H}_{\tau'}(\bm r)$ represent the position set containing the smallest $\tau'$ elements in residual $\bm r$. The problem in Equation \eqref{eq:tau_opt} can be solved by searching from $n$ to $\ceil*{n/2} + 1$ and return the first value $\hat{\tau}$ which satisfies the constraint. It is important to note that the estimation method in Equation \eqref{eq:tau_opt} requires the coefficients $\bm \beta$ to be close to $\bm \beta^*$. Thus, we optimize the uncorrupted set $S$ along with coefficient $\bm \beta$ until both of them converge.

The details of \textit{RoOFS} algorithm are presented in Algorithm \ref{algo:roofs}. In Line 3, the algorithm receives data matrix $X_{\Psi^k}$ with the incoming feature set $\Psi^k$ at time $k$. The new feature set $\Psi^k$ is combined into the retained feature set $\Psi$ in Line 4. For each incoming feature set, the algorithm iteratively optimizes the regression coefficients $\bm \beta$ and the uncorrupted set $S$ until the value of residual vector $\bm r_{S_t}^t$ is converged in Line 14. Specifically, in Line 6, regression coefficients $\bm \beta$ are updated to a better fit for the current estimated feature set $\Psi$ and uncorrupted set $S_t$.
In Line 8, feature set $\Omega$ that contains features with $\abs{\Psi}-\mu$ smallest weights in $\bm \beta$ is selected. Then features in $\Omega$ are removed from the retained feature set $\Psi$ and the weights in $\bm \beta_\Omega$ are reset to zero in Lines 9 and 10.
The residual vector $\bm r$ is updated in Line 11, while the uncorrupted set $S_{t+1}$ is estimated in Line 12 by the thresholding operator. Finally, both coefficients $\bm \beta$ and uncorrupted set $S$ are returned in Line 17.

\begin{algorithm2e}[t]
	\DontPrintSemicolon 
	\KwIn{Corrupted training data \{$\bm x_i, y_i$\}, $i$ = 1...n, feature ratio $\mu$, tolerance $\epsilon$}
	\KwOut{solution $\hat{\bm \beta}$}
	$\bm \beta^0 \leftarrow \bm 0$, $\Psi$ = $\emptyset$, $S_0$ = [n], $t$ $\leftarrow$ 0, $k$ $\leftarrow$ 0 \\
	\Repeat{No more features;}
	{
		Receive features $X_{\Psi^k}$ from the pool $\bar{\Psi}$ with index set $\Psi^k$\\
		$\Psi = \Psi \cup \Psi^k$ \\
		\Repeat{$\|\bm r_{S_{t+1}}^{t+1}-\bm r_{S_{t}}^{t}\|_2 < \epsilon n$}
		{
			$\bm \beta^{t+1}_\Psi \leftarrow \bm \beta^{t}_\Psi - \eta X_{\Psi,S_t}^T(X_{\Psi,S_t}^T \bm \beta_{\Psi}^t - \bm y_{S_t})$\\
			\If{$\abs{\Psi} > \mu$}{
				$\Omega = \argmin_{\Omega \in \Psi}\norm{\bm \beta_\Omega}_1$ \ \ \textit{s.t.} $\abs{\Omega}=\abs{\Psi} - \mu$\\
				$\bm \beta_\Omega = 0$ \\
				$\Psi = \Psi \setminus \Omega$ \\
			}
			$\bm r = \abs{\bm y - X^T\bm \beta}$ \\
			$S_{t+1} \leftarrow$ $\mathcal{H}_{\tau}(\bm r^{t+1})$, where $\tau$ is the estimated uncorrupted size.\\
			$t \leftarrow t+1$ \\
		}
		$k \leftarrow k+1$
	
	}
	
	\textbf{return} $\bm \beta^{t+1}$, $S_{t+1}$
	\caption{{\sc RoOFS Algorithm}}
	\label{algo:roofs}
\end{algorithm2e}

%
%
%
%

\section{Theoretical Analysis} \label{section:analysis}
In this section, we show that the local optimal solution of our algorithm obtains a restricted error bound compared to ground truth solution.
To prove the theoretical properties of our algorithm, we require that the least squares function satisfies the \textit{Subset Restricted Strong Convexity} (SRSC), which is defined as follows:
\begin{definition}[\textbf{SRSC Property}]
The least squares function $f_S(\bm \beta) = \norm{\bm y_S - X_S^T \bm \beta}_2^2$ satisfies Subset Restricted Strong Convexity (SRSC) Property if the following holds for $\forall \bm \beta_1, \bm \beta_2 \in \Omega_\mu$ and $\forall S \in \mathcal{S}_\gamma$:
\begin{equation} \label{eq:SRSC}
\begin{aligned}
f_S(\bm \beta_1) - f_S(\bm \beta_2) \geq \nabla^Tf_S(\bm \beta_2)(\bm \beta_1 - \bm \beta_2)+ \frac{\varphi_\mu}{2}\norm{\bm \beta_1 - \bm \beta_2}_2^2
\end{aligned}
\end{equation}

\end{definition}

To provide the local optimality property of our solution, the following two lemmas are first proved.

\begin{lemma} \label{lemma:residual}
For a given least squares function $f(\bm \beta) = \norm{\bm y - X^T \bm \beta}_2^2$, let residual vector $\bm r = \bm y - X^T \bm \beta$ and $\delta(k)$ be the k-th position of the ascending order in vector $\bm r$, i.e. $r_{\delta(1)} \le r_{\delta(2)} \le ... \le r_{\delta(n)}$. For any $1\le\tau_1<\tau_2\le n$ and $\forall \bm \beta^t \in \Omega_m$, let $S_1=\{\delta(i)|1\le i \le \tau_1\}$ and $S_2=\{\delta(i)|1\le i \le \tau_2\}$. We then have $f_{S_1}(\bm \beta^t) \le f_{S_2}(\bm \beta^t)$. 
\end{lemma}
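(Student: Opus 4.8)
The plan is to reduce the statement to the elementary monotonicity of a sum of non-negative terms under set inclusion, so the proof should be very short. First I would note that, because $S_1$ and $S_2$ are both initial segments of the \emph{same} ascending ordering $\delta(1),\delta(2),\dots,\delta(n)$ and $\tau_1 < \tau_2$, we have $S_1 \subseteq S_2$; more precisely, $S_2$ is the disjoint union of $S_1$ and the index set $\{\delta(i) : \tau_1 < i \le \tau_2\}$.

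Next I would expand the restricted least-squares objective coordinate-wise: for any index set $S \subseteq [n]$ and any $\bm \beta^t$,
\[
f_S(\bm \beta^t) = \norm{\bm y_S - X_S^T \bm \beta^t}_2^2 = \sum_{i \in S}\bigl(y_i - \bm x_i^T\bm \beta^t\bigr)^2 ,
\]
a sum whose summands are all non-negative. Applying this identity to $S_1$ and to $S_2$ and subtracting, the disjoint-union decomposition above gives
\[
f_{S_2}(\bm \beta^t) - f_{S_1}(\bm \beta^t) = \sum_{i \in S_2\setminus S_1}\bigl(y_i - \bm x_i^T\bm \beta^t\bigr)^2 \ge 0 ,
\]
which is precisely the claimed inequality $f_{S_1}(\bm \beta^t) \le f_{S_2}(\bm \beta^t)$.

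I do not anticipate any real obstacle here: the only point worth stating carefully is the inclusion $S_1 \subseteq S_2$, which follows purely from both sets being prefixes of one sorted order and does not actually use the ascending-order property of $\bm r$ — that hypothesis appears because it is the form in which the lemma is invoked later, where the sorted structure of the residuals is what matters. Accordingly, I would keep the argument to the two steps above and not bring in the SRSC property or any finer feature of the ordering $\delta$.
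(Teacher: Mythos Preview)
Your argument is correct and uses the same decomposition as the paper: set $S_3 = S_2\setminus S_1 = \{\delta(i):\tau_1<i\le\tau_2\}$, write $f_{S_2}=f_{S_1}+f_{S_3}$, and conclude from $f_{S_3}\ge 0$. The only difference is that the paper also invokes the ordering of the residuals to squeeze out the sharper intermediate inequality $f_{S_1}(\bm\beta^t)\le \tfrac{\tau_1}{\tau_2}\,f_{S_2}(\bm\beta^t)$ before relaxing to the stated bound, whereas you (correctly) observe that plain set inclusion already suffices for the lemma as written.
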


\begin{proof}
Let $S_3 = \{\delta(i): \tau_1 + 1 \le i \le \tau_2\}$. Clearly, we have $f_{S_2}(\bm \beta^t) = f_{S_1}(\bm \beta^t) + f_{S_3}(\bm \beta^t)$. Moreover, since each element in $S_3$ is larger than any of the element in $S_1$, we have $f_{S_1}(\bm \beta^t) \le f_{S_2}(\bm \beta^t) + \frac{|S_3|}{|S_1|}f_{S_1}(\bm \beta^t)$ $\le \frac{|S_1|}{|S_1|+|S_3|}f_{S_2}(\bm \beta^t) = \frac{\tau_1}{\tau_2}f_{S_2}(\bm \beta^t)\le f_{S_2}(\bm \beta^t)$.
\end{proof}

\begin{lemma} \label{lemma:tau}
	Let $\tau_* = \gamma n$ be the true number of uncorrupted samples and $\tau_t$ be the estimated uncorrupted threshold at the $t$-th iteration. If $\tau_t \le \tau_*$, then $f_{S_t}(\bm \beta^t) \le f_{S_*}(\bm \beta^t)$. If $\tau_t > \tau_*$, then $f_{S_t}(\bm \beta^t) \le \lambda f_{S_*}(\bm \beta^t)$, where $\lambda = \Big[1+\frac{128(1-\gamma)}{2\gamma-1}\Big]$.
\end{lemma}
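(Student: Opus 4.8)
The plan is to reduce everything to inequalities between sums of \emph{ordered} squared residuals. Write $r_{(1)}\le r_{(2)}\le\cdots\le r_{(n)}$ for the sorted residual magnitudes of $\bm\beta^t$, so that $f_S(\bm\beta^t)=\sum_{i\in S}r_i^2$, and, because $S_t=\mathcal H_{\tau_t}(\bm r)$ selects exactly the $\tau_t$ smallest‑residual coordinates, $f_{S_t}(\bm\beta^t)=\sum_{i=1}^{\tau_t}r_{(i)}^2$. The only property of $S_*$ I will use is that $|S_*|=\tau_*=\gamma n$: since $r_i^2\ge0$, the $\tau_*$ smallest squared residuals minimise $\sum_{i\in S}r_i^2$ over all $|S|=\tau_*$, hence $f_{S_*}(\bm\beta^t)\ge\sum_{i=1}^{\tau_*}r_{(i)}^2$ (this is Lemma~\ref{lemma:residual} together with ``restricting to a subset only decreases a sum of nonnegatives''). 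Throughout, $\gamma\in(\tfrac12,1)$ is implicit, so $\lambda>1$. I then split on the sign of $\tau_t-\tau_*$.

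If $\tau_t\le\tau_*$, then $f_{S_t}(\bm\beta^t)=\sum_{i=1}^{\tau_t}r_{(i)}^2\le\sum_{i=1}^{\tau_*}r_{(i)}^2\le f_{S_*}(\bm\beta^t)$, which is exactly the claim; equivalently, apply Lemma~\ref{lemma:residual} with $\tau_1=\tau_t,\tau_2=\tau_*$ and then pass from $\mathcal H_{\tau_*}(\bm r)$ to $S_*$.

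If $\tau_t>\tau_*$, I decompose
\[
f_{S_t}(\bm\beta^t)\;=\;\sum_{i=1}^{\tau_*}r_{(i)}^2+\sum_{i=\tau_*+1}^{\tau_t}r_{(i)}^2\;\le\;f_{S_*}(\bm\beta^t)+(\tau_t-\tau_*)\,r_{(\tau_t)}^2,
\]
so it suffices to bound the overshoot $(\tau_t-\tau_*)r_{(\tau_t)}^2$ by $(\lambda-1)f_{S_*}(\bm\beta^t)$, and for the denominator I use $f_{S_*}(\bm\beta^t)\ge\sum_{i=1}^{\tau'}r_{(i)}^2=\tau'\,\overline{r^2}$ where $\tau'=\tau_t-\ceil{n/2}\ (\le\tau_*)$ and $\overline{r^2}$ is the mean square of the $\tau'$ smallest residuals. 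Now I invoke that $\tau_t$ is feasible for \eqref{eq:tau_opt}: this yields $r_{(\tau_t)}\le\frac{2\tau_t}{\tau_o}r_{(\tau_o)}$, and, since $r_{(\tau_o)}^2$ is the value closest to $\overline{r^2}$ while $r_{(1)}^2\le\overline{r^2}\le r_{(\tau')}^2$, it also gives $\tau_o\le\tau'$ and $r_{(\tau_o)}^2\le 2\overline{r^2}$. Chaining these turns the overshoot into a constant multiple of $\tau'\overline{r^2}\le f_{S_*}(\bm\beta^t)$ whose leftover multiplicative factor depends only on $\tau_t,\tau_*,\tau_o,\tau',n$; plugging in $\ceil{n/2}<\tau_t\le n$, $\tau_*=\gamma n$, $\tau_t-\tau_*\le\min\{\tau',(1-\gamma)n\}$, $\tau'\ge\tfrac{2\gamma-1}{2}n$ (forced by $\tau_t>\gamma n$), and a lower bound $\tau_o\ge c\,\tau'$ should collapse it to $\tfrac{128(1-\gamma)}{2\gamma-1}$.

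The main obstacle is the last ingredient: a usable \emph{lower} bound on $\tau_o$. Since $\tau_o$ is only specified as the index whose squared residual is closest to the mean $\overline{r^2}$, in degenerate configurations (e.g. the small residuals being nearly equal) $\tau_o$ can be far below $\tau'$, leaving an uncontrolled $\tau_t/\tau_o$ in the naive chain. The remedy I plan is a Markov‑type observation — at least half of the $\tau'$ smallest squared residuals lie below $2\overline{r^2}$, which forces $\tau_o\in\{a,a+1\}$ with $a=|\{j:r_{(j)}^2<\overline{r^2}\}|$ — combined with the complementary trivial estimate $f_{S_t}/f_{S_*}\le\tau_t/\tau_*\le 1/\gamma$, which already beats $\lambda$ precisely in the ``small residuals roughly constant'' regime where the feasibility condition is weak. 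Case‑splitting $\tau_t>\tau_*$ according to whether $\tau_o$ is a constant fraction of $\tau'$ and using the two estimates in the two sub‑regimes is what produces the clean constant; everything else (the decomposition, $\tau_o\le\tau'$, $r_{(\tau_o)}^2\le2\overline{r^2}$, and the arithmetic in $n$) is routine.
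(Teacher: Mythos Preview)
Your overall architecture is the paper's: the $\tau_t\le\tau_*$ case is Lemma~\ref{lemma:residual} plus monotonicity, and for $\tau_t>\tau_*$ you split off an ``overshoot'' of size at most $(1-\gamma)n$, bound each overshoot term by $r_{(\tau_t)}^2$, and then use the feasibility constraint of \eqref{eq:tau_opt} together with $r_{(\tau_o)}^2\le 2\overline{r^2}$ and $|S_*\cap S_t|\ge\tau'$ to turn $r_{(\tau_t)}^2$ into a multiple of $f_{S_*}(\bm\beta^t)$. The only cosmetic difference is that you split $S_t$ by ordered position ($[1,\tau_*]$ versus $(\tau_*,\tau_t]$) whereas the paper splits by membership in $S_*$ ($S_*\cap S_t$ versus $S_t\setminus S_*$); both routes feed the same count $\le(1-\gamma)n$ into the same arithmetic and land on $\lambda=1+\frac{128(1-\gamma)}{2\gamma-1}$.

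Where you go further than the paper is in isolating the lower bound on $\tau_o$ as the crux. The paper simply asserts its step~(a),
\[
\Big(\tfrac{2\tau r_{\tau_o}}{\tau_o}\Big)^2 \le \tfrac{64}{\tau'}\norm{\bm r_{S_*\cap S_t}}_2^2,
\]
citing ``the definition of $\tau_o$ and $|S_*\cap S_t|\ge\tau'$'' without explaining how $\tau/\tau_o$ is controlled; you are right that this is the delicate point. However, your proposed patch does not work. The ``trivial estimate'' $f_{S_t}/f_{S_*}\le\tau_t/\tau_*$ is \emph{false}: for a nondecreasing sequence $r_{(1)}^2\le\cdots\le r_{(n)}^2$ one always has
\[
\frac{\sum_{i=1}^{\tau_t} r_{(i)}^2}{\sum_{i=1}^{\tau_*} r_{(i)}^2}\;\ge\;\frac{\tau_t}{\tau_*},
\]
with equality only when the relevant terms are all equal, so the inequality you invoke goes the wrong way and cannot serve as the ``easy'' branch of a case split. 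Consequently your dichotomy (``$\tau_o$ is a constant fraction of $\tau'$'' versus ``small residuals roughly constant, use the trivial ratio bound'') does not close: the first regime is handled, but in the complementary regime you have no valid upper bound on $f_{S_t}/f_{S_*}$. To match the paper's constant $64$ in step~(a) you would instead need a direct argument that $\tau/\tau_o$ is bounded (e.g.\ by restricting the admissible range of $\tau_o$ in the definition, or by showing that the ``closest to the mean'' index cannot fall below a fixed fraction of $\tau'$); the Markov observation you mention gives $r_{(\lceil\tau'/2\rceil)}^2\le 2\overline{r^2}$ but does not by itself pin down $\tau_o$.
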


\begin{proof}
To simplify the notation, the subscripts $t$ that signify the $t$-th iteration will be omitted and the residual vector $\bm r$ is assumed to be sorted in ascending order of magnitude.

We will discuss the $\tau_t$ value in two different conditions compared to the value of $\tau_*$. In the first condition that $\tau_t \le \tau_*$, let $S_t=\{\delta(i)|1\le i \le \tau_t\}$ and $S_*=\{\delta(i)|1\le i \le \tau_*\}$, we have $f_{S_t}(\bm \beta^t) \le f_{S_*}(\bm \beta^t)$ according to Lemma \ref{lemma:residual}.
When $\tau_t > \tau_*$, we have the following properties according to the constraint specified in equation \eqref{eq:tau_opt}.

\begin{align*}					
	r_{\tau}^2 \le \bigg(2 \cdot \frac{\tau r_{\tau_o}}{\tau_o} \bigg)^2 \ \stackrel{(a)}{\le}&\ \frac{64}{\tau'}\norm{\bm r_{S_* \cap S_t}}_2^2 \\
	\abs{S_t\setminus S_*}r_{\tau}^2 \stackrel{(b)}{\le}& 64(1-\gamma)\cdot \frac{n}{\tau'}\norm{\bm r_{S_* \cap S_t}}_2^2
\end{align*}

The inequality \textit{(a)} follows the definition of $\tau_o$ and the fact that $\abs{S_* \cap S_t} \ge \tau'$. The inequality \textit{(b)} follows $\abs{S_t\setminus S_*} \le (1-\gamma)\cdot n$ and $\norm{\bm r_{S_t\setminus S_*}}_2^2 \le \abs{S_t\setminus S_*}r_{\tau}^2$. Then we have
\begin{align*}					
		%
		f_{S_t\setminus S_*}(\bm \beta) \le& \Big[64(1-\gamma)\cdot \frac{n}{\tau'} + 1\Big]f_{S_*\setminus S_t}(\bm \beta) \\
		+& \Big[64(1-\gamma)\cdot \frac{n}{\tau'}\Big]f_{S_* \cap S_t}(\bm \beta) \\
		f_{S_t\setminus S_*}(\bm \beta) + f_{S_* \cap S_t}(\bm \beta) \stackrel{(c)}{\le}& \Big[64(1-\gamma)\cdot \frac{n}{\tau'} + 1\Big] f_{S_*}(\bm \beta) \\
		f_{S_t}(\bm \beta) \stackrel{(d)}{\le}& \Big[1+\frac{128(1-\gamma)}{2\gamma-1}\Big] f_{S_*}(\bm \beta)
\end{align*}

The inequality \textit{(c)} follows $f_{S_*}(\bm \beta) = f_{S_*\setminus S_t}(\bm \beta) + f_{S_* \cap S_t}(\bm \beta)$ and the inequality \textit{(d)} follows $\tau' = \tau_t-\frac{n}{2}$.
\end{proof}

\begin{theorem}\label{theorem:local_optima}
Assume that least squares function $f_S(\bm \beta) = \norm{\bm y_S - X_S^T \bm \beta}_2^2$ satisfies Subset Restricted Strong Convexity (SRSC) Property for $\forall \bm \beta_1, \bm \beta_2 \in \Omega_\mu$ and $\forall S \in \mathcal{S}_\gamma$, then we have
\begin{equation} \label{eq:theorem3}
\begin{aligned}
f_{\hat{S}}(\hat{\bm \beta}) - f_{S^*}(\bm \beta^*) \leq \frac{\alpha\lambda}{1+\alpha} f_{S^*}(\bm 0) + \bigg( \frac{\lambda}{1+\alpha} - 1 \bigg)f_{S^*}(\bm \beta^*) \\
\end{aligned}
\end{equation}
where $\alpha = \big(\frac{1}{\eta \cdot \varphi_\mu }\big)^2$ and $\lambda = \Big[1+\frac{128(1-\gamma)}{2\gamma-1}\Big]$. Specifically, when the uncorrupted set size is less than ground truth, $\lambda=1$.
\end{theorem}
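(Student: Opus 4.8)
The plan is to study the fixed point of Algorithm~\ref{algo:roofs}. As $\epsilon\to 0$ the inner loop stops only when the residual --- and hence, since $\bm r=\abs{\bm y-X^T\bm\beta}$ pins down $\bm\beta$ on the active support, which SRSC makes identifiable --- the iterate $\hat{\bm\beta}$ and the selected set $\hat S=\mathcal H_{\hat\tau}(\bm r)$ are stationary. From stationarity I extract that the gradient step in Line~6 is a no-op, i.e. $\nabla_{\hat\Psi}f_{\hat S}(\hat{\bm\beta})=\bm 0$ where $\hat\Psi=\supp(\hat{\bm\beta})$; by the SRSC property ($\varphi_\mu$-strong convexity of $f_{\hat S}$ on vectors supported in $\hat\Psi$) this makes $\hat{\bm\beta}$ the unique minimizer of $f_{\hat S}$ over $\{\bm\beta\in\Omega_\mu:\supp(\bm\beta)\subseteq\hat\Psi\}$, so in particular $f_{\hat S}(\hat{\bm\beta})\le f_{\hat S}(\bm 0)$, while the substitution rule in Lines~7--10 leaves $\hat\Psi$ at a local optimum of the feature selection, which I will use to bound the gradient of $f_{\hat S}$ in directions outside $\hat\Psi$.

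Next I would transfer the bound off the algorithm-dependent set $\hat S$ onto the ground-truth uncorrupted set $S^*$. Applying Lemma~\ref{lemma:tau} at the converged iterate gives $f_{\hat S}(\hat{\bm\beta})\le\lambda\,f_{\tilde S}(\hat{\bm\beta})$, where $\tilde S$ collects the $\tau_*=\gamma n$ smallest residuals of $\hat{\bm\beta}$ and $\lambda=1$ on the branch $\hat\tau\le\tau_*$. Since at most a $(1-\gamma)$ fraction of points are corrupted, $\abs{S^*}\ge\tau_*$, so the monotonicity behind Lemma~\ref{lemma:residual} --- any $\tau_*$ coordinates of $S^*$ already account for at least the $\tau_*$ globally smallest squared residuals, and appending the rest of $S^*$ only increases $f$ --- yields $f_{\tilde S}(\hat{\bm\beta})\le f_{S^*}(\hat{\bm\beta})$, hence $f_{\hat S}(\hat{\bm\beta})\le\lambda\,f_{S^*}(\hat{\bm\beta})$. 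It therefore suffices to prove the ``on-$S^*$'' estimate $(1+\alpha)\,f_{S^*}(\hat{\bm\beta})\le\alpha\,f_{S^*}(\bm 0)+f_{S^*}(\bm\beta^*)$: dividing by $1+\alpha$, multiplying by $\lambda$ and subtracting $f_{S^*}(\bm\beta^*)$ reproduces \eqref{eq:theorem3}, and the $\lambda=1$ case of the theorem is precisely the $\hat\tau\le\tau_*$ branch of Lemma~\ref{lemma:tau}.

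The core is that ``on-$S^*$'' estimate. I would instantiate the SRSC inequality on $\hat S$ (which lies in $\mathcal S_\gamma$ by construction of $\mathcal H_{\hat\tau}$) at the pairs $(\bm\beta^*,\hat{\bm\beta})$ and $(\bm 0,\hat{\bm\beta})$. Because $\hat{\bm\beta}$ is supported in $\hat\Psi$ and $\nabla_{\hat\Psi}f_{\hat S}(\hat{\bm\beta})=\bm 0$, the first-order term drops in the $(\bm 0,\hat{\bm\beta})$ instance and collapses to a term living on $\Psi^*\setminus\hat\Psi$ in the $(\bm\beta^*,\hat{\bm\beta})$ instance, which the feature-selection local optimality controls. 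Strong convexity then turns these into quantitative bounds on $\norm{\hat{\bm\beta}}_2$ and $\norm{\hat{\bm\beta}-\bm\beta^*}_2$ in terms of $\varphi_\mu^{-1}\big(f_{\hat S}(\bm 0)-f_{\hat S}(\hat{\bm\beta})\big)$ and $\varphi_\mu^{-1}\big(f_{\hat S}(\bm\beta^*)-f_{\hat S}(\hat{\bm\beta})\big)$; feeding these back through the quadratic expansion of $f_{S^*}$ around $\hat{\bm\beta}$ --- whose cross terms have the form $\norm{X_{S^*}^T(\cdot)}_2^2$, which is why the linear $\varphi_\mu^{-1}$ factors get squared --- together with the step length $\eta$ from the fixed-point identity yields the constant $\alpha$ and the advertised convex combination of $f_{S^*}(\bm 0)$ and $f_{S^*}(\bm\beta^*)$.

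I expect this last step to be the main obstacle. Since the feature selection is only \emph{locally} optimal and features arrive online, $\hat{\bm\beta}$ cannot be compared against an arbitrary competitor on $f_{\hat S}$; the whole difficulty is controlling $\nabla_{\Psi^*\setminus\hat\Psi}f_{\hat S}(\hat{\bm\beta})$ in the ``wrong'' directions, and it is exactly this uncontrolled direction that blocks the clean conclusion $f_{S^*}(\hat{\bm\beta})\le f_{S^*}(\bm\beta^*)$ and forces the fall-back term $\tfrac{\alpha\lambda}{1+\alpha}f_{S^*}(\bm 0)$ into the bound. A secondary point is to verify throughout that $\hat S\in\mathcal S_\gamma$ and that every iterate stays in $\Omega_\mu$ (both hold by construction of the thresholding operator and the $\mu$-sparsity enforced in Lines~8--10), so that SRSC is legitimately invoked at each use.
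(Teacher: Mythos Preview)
Your high-level architecture matches the paper exactly: establish the ``on-$S^*$'' estimate $(1+\alpha)f_{S^*}(\hat{\bm\beta})\le\alpha f_{S^*}(\bm 0)+f_{S^*}(\bm\beta^*)$, then invoke Lemma~\ref{lemma:tau} to pass from $f_{S^*}(\hat{\bm\beta})$ to $f_{\hat S}(\hat{\bm\beta})$. Your handling of Lemma~\ref{lemma:tau} is in fact more careful than the paper's: you correctly distinguish the set $\tilde S$ of the $\tau_*$ smallest residuals from the ground-truth $S^*$ and insert the monotonicity step $f_{\tilde S}(\hat{\bm\beta})\le f_{S^*}(\hat{\bm\beta})$, whereas the paper silently identifies the two.

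Where you diverge from the paper is in the core argument for the on-$S^*$ estimate, and your route there is underspecified. You propose to instantiate SRSC on $\hat S$, exploit $\nabla_{\hat\Psi}f_{\hat S}(\hat{\bm\beta})=\bm 0$, and then ``feed back through the quadratic expansion of $f_{S^*}$''; but the transfer from $f_{\hat S}$-quantities to $f_{S^*}$-quantities is precisely the hard part, and nothing you wrote makes it concrete. The paper sidesteps this by applying SRSC on $S^*$ from the outset, via the pivot scalar $\hat{\bm\beta}_{\min}=\min_i|\hat{\bm\beta}_i|$. Two inequalities do all the work:
\begin{enumerate}
\item From feature-substitution stability at the fixed point, $\|\nabla f_{S^*}(\hat{\bm\beta})\|_\infty\le\tfrac{1}{\eta}\hat{\bm\beta}_{\min}$. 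Plugging into the general SRSC consequence $f_{S^*}(\bm\beta^*)-f_{S^*}(\hat{\bm\beta})\ge-\tfrac{\mu}{2\varphi_\mu}\|\nabla f_{S^*}(\hat{\bm\beta})\|_\infty^2$ gives $f_{S^*}(\hat{\bm\beta})-f_{S^*}(\bm\beta^*)\le\tfrac{\mu}{2\varphi_\mu\eta^2}\hat{\bm\beta}_{\min}^2$.
\item From SRSC at $(\bm 0,\hat{\bm\beta})$ together with $\langle\nabla f_{S^*}(\hat{\bm\beta}),-\hat{\bm\beta}\rangle\ge 0$, one gets $\mu\hat{\bm\beta}_{\min}^2\le\|\hat{\bm\beta}\|_2^2\le\tfrac{2}{\varphi_\mu}\big(f_{S^*}(\bm 0)-f_{S^*}(\hat{\bm\beta})\big)$.
\end{enumerate}
Chaining the two eliminates $\hat{\bm\beta}_{\min}$ and yields exactly $f_{S^*}(\hat{\bm\beta})-f_{S^*}(\bm\beta^*)\le\alpha\big(f_{S^*}(\bm 0)-f_{S^*}(\hat{\bm\beta})\big)$ with $\alpha=(\eta\varphi_\mu)^{-2}$. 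You correctly anticipated that controlling the gradient in the off-support directions $\Psi^*\setminus\hat\Psi$ is the crux; the paper's answer is item~1, which bounds \emph{all} coordinates of $\nabla f_{S^*}(\hat{\bm\beta})$ by $\hat{\bm\beta}_{\min}/\eta$ rather than only those in $\hat\Psi$. Your proposal never introduces $\hat{\bm\beta}_{\min}$, and without it the pieces you have do not close. (As a side remark, the paper asserts items~1 and the sign condition $\langle\nabla f_{S^*}(\hat{\bm\beta}),-\hat{\bm\beta}\rangle\ge 0$ on $S^*$ rather than on $\hat S$ without justification; the algorithm only directly controls $\nabla f_{\hat S}$, so if you want a fully rigorous version you will need to argue why the fixed-point gradient information transfers from $\hat S$ to $S^*$.)
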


\begin{proof}
	As function $f_S(\bm \beta)$ satisfies SRSC property, we have
	\begin{equation} \label{eq:theorem3}
	\begin{aligned}
	f_{S}(\bm \beta_1) - f_S(\bm \beta_2) \geq \langle \nabla f_S(\bm \beta_2), \bm \beta_1 - \bm \beta_2 \rangle + \frac{\varphi_\mu}{2}\norm{\bm \beta_1 - \bm \beta_2}_2^2 \\
	\end{aligned}
	\end{equation}
	for $\forall \bm \beta_1, \bm \beta_2 \in \Omega_\mu$ and $\forall S \in \mathcal{S}_\gamma$. Let $\supp(\bm \beta_1) = \Omega_{1}$, then we have	
	
\resizebox{.47\textwidth}{!}{\begin{minipage}{\linewidth}
\begin{equation}\label{eq:property1}
\begin{aligned}
f_{S}(\bm \beta_1)& - f_S(\bm \beta_2) \\
	\ge& \min_{\supp(\bm \beta) \subseteq \Omega_{1}} \Big\{ \langle \nabla f_S(\bm \beta_2), \bm \beta - \bm \beta_2 \rangle + \frac{\varphi_\mu}{2}\norm{\bm \beta - \bm \beta_2}_2^2 \Big\} \\
	\stackrel{(a)}{=}& -\frac{1}{2\varphi_\mu} {\Bigg\| \Big[ \nabla f_S(\bm \beta_2) \Big]_{\Omega_1} \Bigg\|}_2^2 \stackrel{(b)}{\ge} -\frac{\abs{\Omega_1}}{2\varphi_\mu} {\Bigg\| \Big[ \nabla f_S(\bm \beta_2) \Big]_{\Omega_1} \Bigg\|}_\infty^2\\
	\ge& -\frac{\mu}{2\varphi_\mu} {\Big\| \nabla f_S(\bm \beta_2) \Big\|}_\infty^2	
\end{aligned}
\end{equation}
\end{minipage}}

The equation (a) solves the minimum value of $\min_{\supp(\bm \beta) \subseteq \Omega_{1}}\{\cdot\}$ by setting its gradient to zero, and the inequality (b) follows $\abs{\Omega_1} \le \mu$. Let $\bm \beta_1$, $\bm \beta_2$ be ground truth coefficient $\bm \beta^*$ and estimated solution $\hat{\bm \beta}$ respectively, and set $S$ be ground truth uncorrupted set $S^*$, then we have
\resizebox{.47\textwidth}{!}{\begin{minipage}{\linewidth}
\begin{align*}
f_{S^*}(\hat{\bm \beta}) - f_{S^*}(\bm \beta^*) \le& \frac{\mu}{2\varphi_\mu} {\Big\| \nabla f_{S^*}(\hat{\bm \beta}) \Big\|}_\infty^2 \le& \frac{\mu}{2\varphi_\mu} {\Big( \frac{1}{\eta} \hat{\bm \beta}_{\min} \Big)}^2
\end{align*}
\end{minipage}}
where $\hat{\bm \beta}_{\min} = \min_i \abs{\hat{\bm \beta}_i}$. According to SRSC property, we have
{
\begin{align*}
f_{S^*}(\bm 0) - f_{S^*}(\hat{\bm \beta}) \geq \langle \nabla f_{S^*}(\hat{\bm \beta}), -\hat{\bm \beta} \rangle + \frac{\varphi_\mu}{2}\norm{\hat{\bm \beta}}_2^2
\end{align*}}

Because of $\langle \nabla f_{S^*}(\hat{\bm \beta}), -\hat{\bm \beta} \rangle \geq 0$, we have

\resizebox{.47\textwidth}{!}{\begin{minipage}{\linewidth}
\begin{align*}
\mu \hat{\bm \beta}_{\min}^2 \le \norm{\hat{\bm \beta}}^2_2 \le& \frac{2}{\varphi_\mu}\Big[ f_{S^*}(\bm 0) - f_{S^*}(\hat{\bm \beta}) \Big] \\
2\varphi_\mu \cdot \eta^2 \Big[ f_{S^*}(\hat{\bm \beta}) - f_{S^*}(\bm \beta^*) \Big] \stackrel{(c)}{\le}& \frac{2}{\varphi_\mu}\Big[ f_{S^*}(\bm 0) - f_{S^*}(\hat{\bm \beta}) \Big]
\end{align*}
\end{minipage}}

The inequality (c) follows the Equation \eqref{eq:property1}. Let $\alpha$ be $\big(\frac{1}{\eta \varphi_\mu}\big)^2$, then we have

\begin{align*}
f_{S^*}(\hat{\bm \beta}) - f_{S^*}(\bm \beta^*) \le \alpha \Big[ f_{S^*}(\bm 0) - f_{S^*}(\hat{\bm \beta}) \Big] \\
f_{S^*}(\hat{\bm \beta}) \le \frac{\alpha}{1+\alpha} \Big[ f_{S^*}(\bm 0) - f_{S^*}(\hat{\bm \beta}) \Big] + f_{S^*}(\bm \beta^*)
\end{align*}

According to Lemma \ref{lemma:tau}, we have

\begin{align*}
\frac{1}{\lambda} f_{\hat{S}}(\hat{\bm \beta}) \le f_{S^*}(\bm \beta^*) \le \frac{\alpha}{1+\alpha} \Big[ f_{S^*}(\bm 0) - f_{S^*}(\hat{\bm \beta}) \Big] + f_{S^*}(\bm \beta^*)\\
f_{\hat{S}}(\hat{\bm \beta}) - f_{S^*}(\bm \beta^*) \leq \frac{\alpha\lambda}{1+\alpha} f_{S^*}(\bm 0) + \bigg( \frac{\lambda}{1+\alpha} - 1 \bigg)f_{S^*}(\bm \beta^*)
\end{align*}

\end{proof}
Since the value of $f_{S^*}(\bm 0)$ and $f_{S^*}(\bm \beta^*)$ are both constants and $f_{S^*}(\bm \beta^*)$ is close to 0, the error bound of our solution is depended on the value of $\frac{\alpha\lambda}{1+\alpha}$. When the ratio of data corruption $\gamma$ is close to one, $\lambda$ is close to one according to its definition. In addition, the value of $\alpha$ is smaller when the parameter $\varphi_\mu$ of the SRSC property is smaller. Therefore, the error of our solution can be close to zero when both $\varphi_\mu$ and $\gamma$ are large enough.

\begin{figure*}[t]
	\centering
		\subfigure[p=2K, n=1K, $\mu$/$p$=20\%, dense]{%
			\label{fig:beta_1}
			\includegraphics[trim=0.6cm 0.1cm 0.6cm 0.1cm,width=0.32\linewidth]{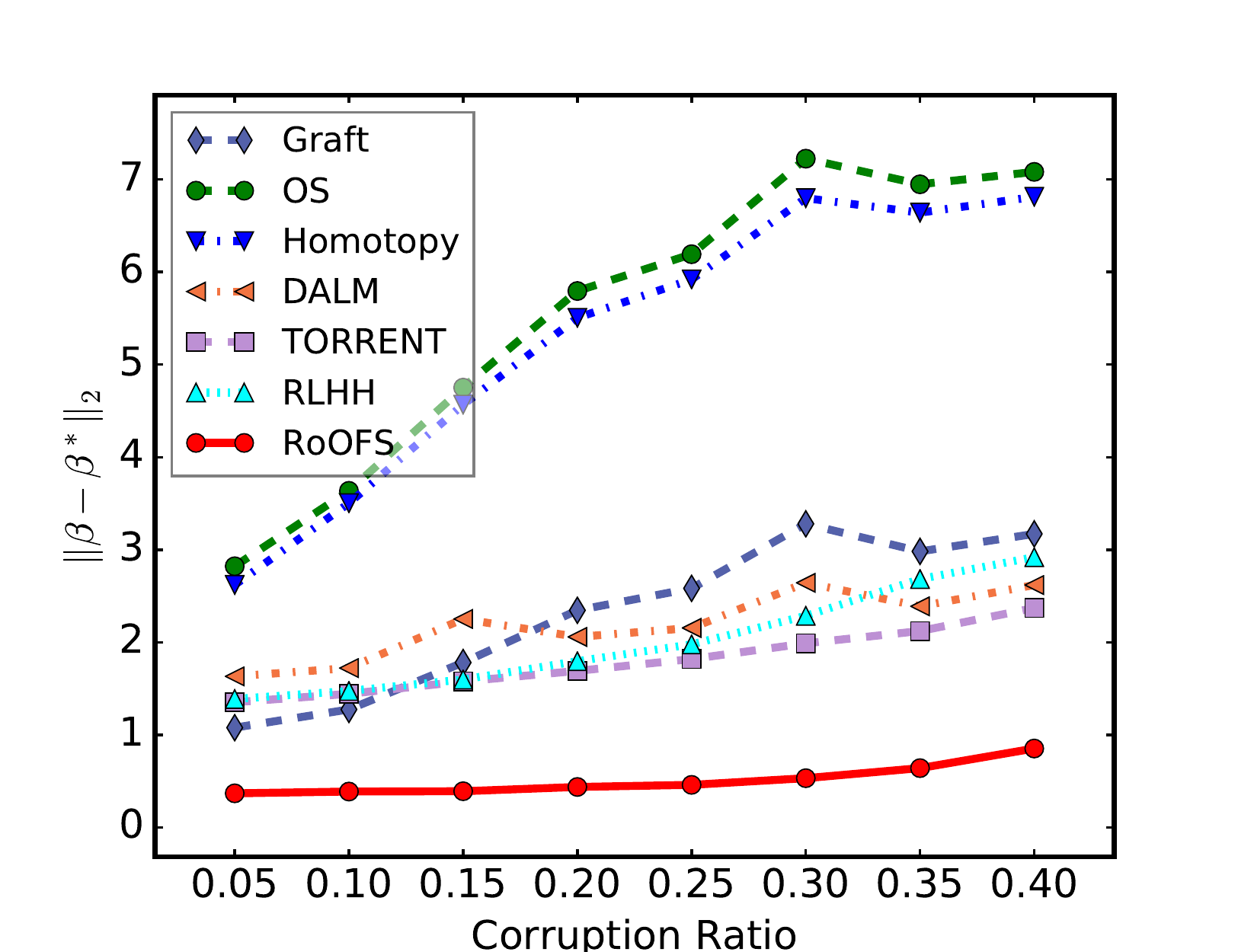}
	}
		\subfigure[p=4K, n=1K, $\mu$/$p$=20\%, dense]{%
			\label{fig:beta_2}
			\includegraphics[trim=0.6cm 0.1cm 0.6cm 0.1cm,width=0.32\linewidth]{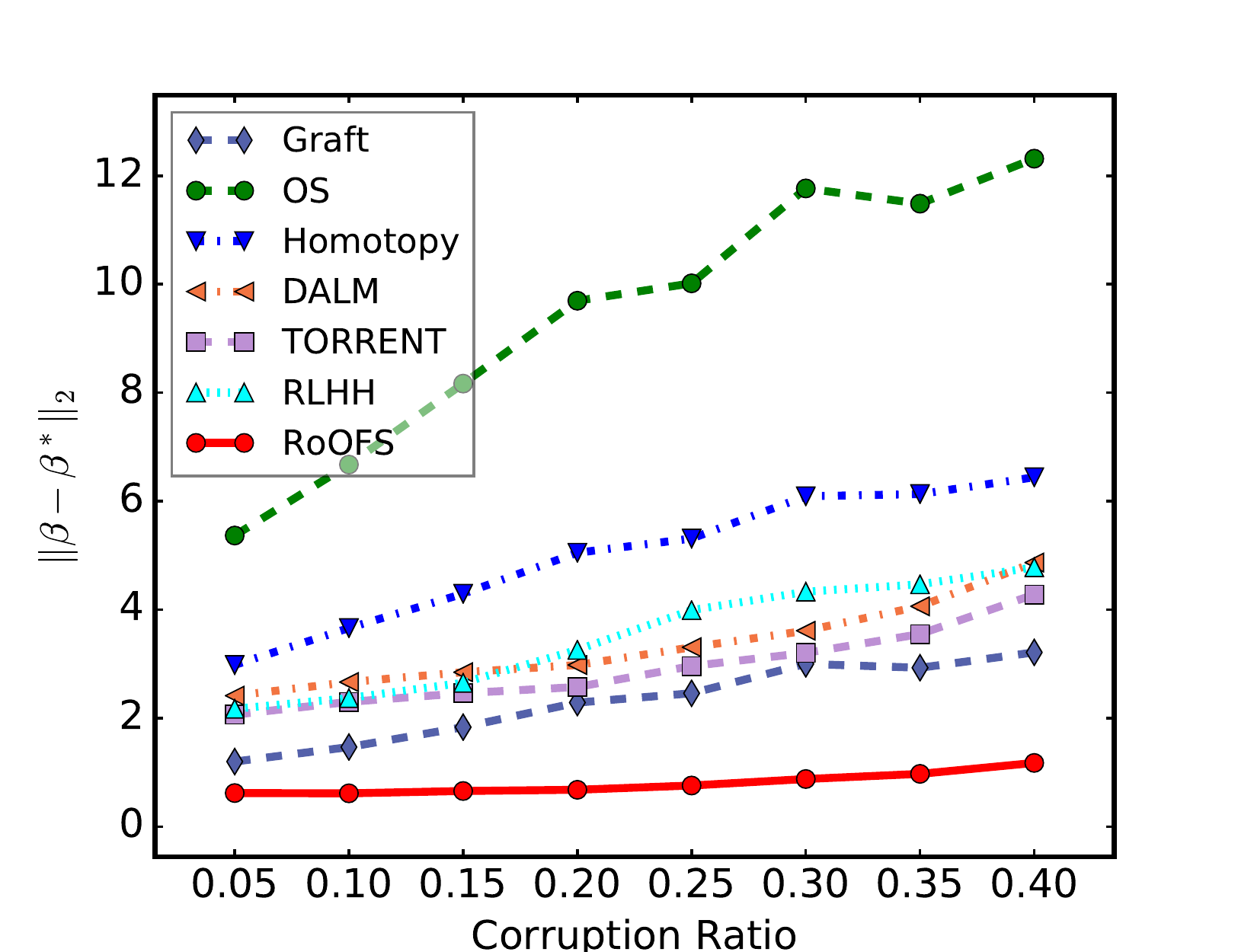}
	}
		\subfigure[p=4K, n=2K, $\mu$/$p$=20\%, dense]{%
			\label{fig:beta_3}
			\includegraphics[trim=0.6cm 0.1cm 0.6cm 0.1cm,width=0.32\linewidth]{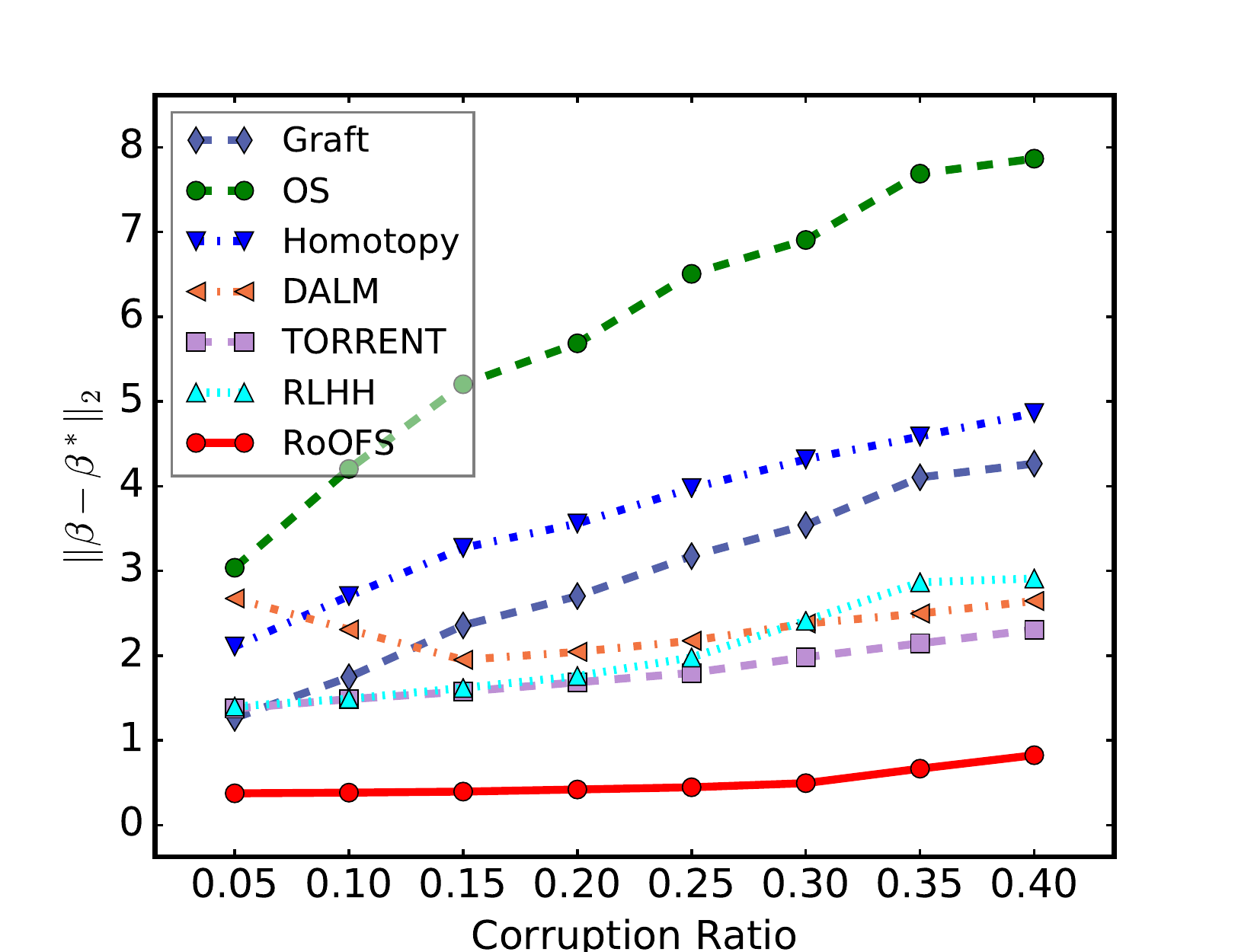}
	}

		\subfigure[p=4K, n=1K, $\mu$/$p$=40\%, dense]{%
			\label{fig:beta_4}
			\includegraphics[trim=0.6cm 0.1cm 0.6cm 0.1cm,width=0.32\linewidth]{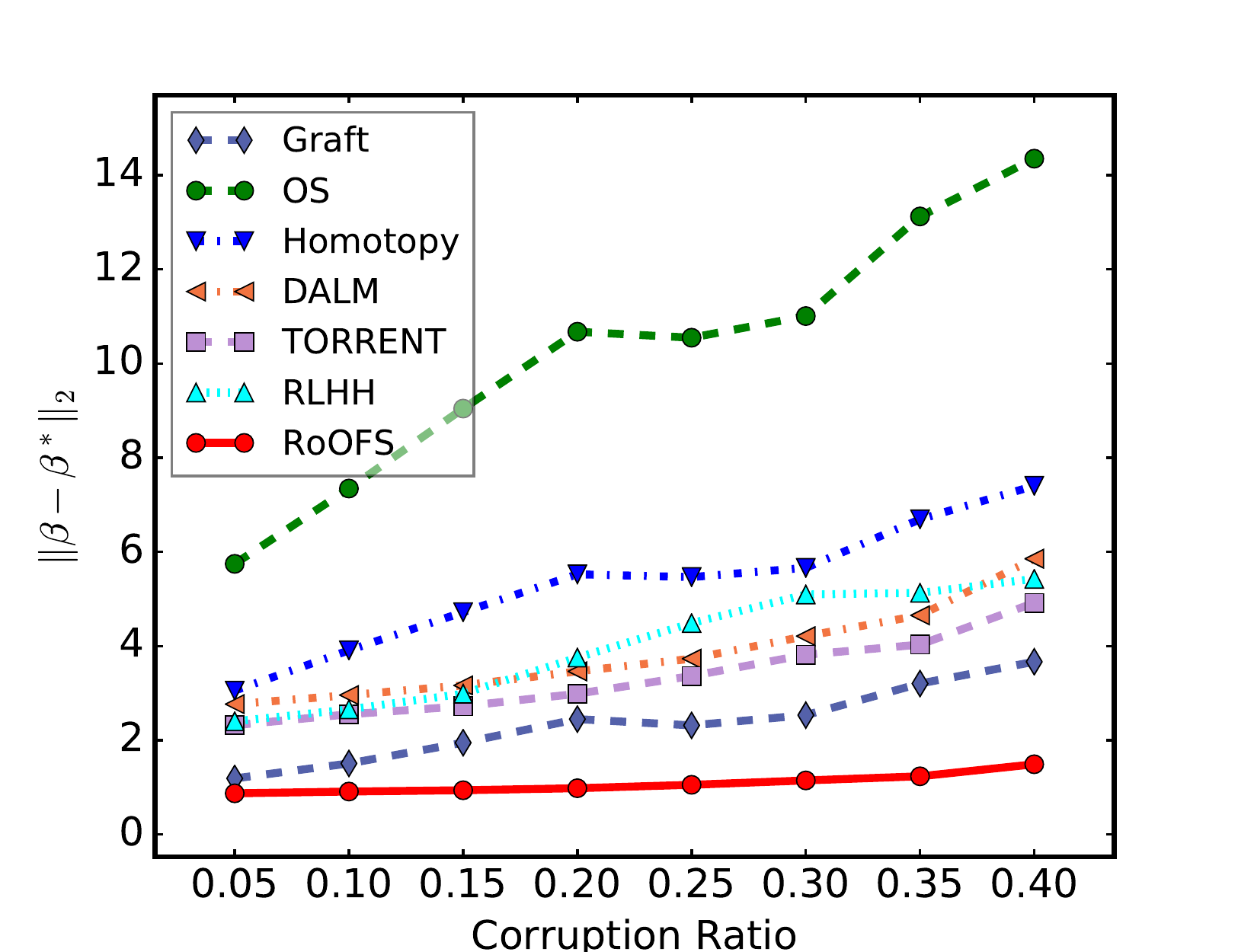}
	}
		\subfigure[p=4K, n=1K, $\mu$/$p$=80\%, dense]{%
			\label{fig:beta_5}
			\includegraphics[trim=0.6cm 0.1cm 0.6cm 0.1cm,width=0.32\linewidth]{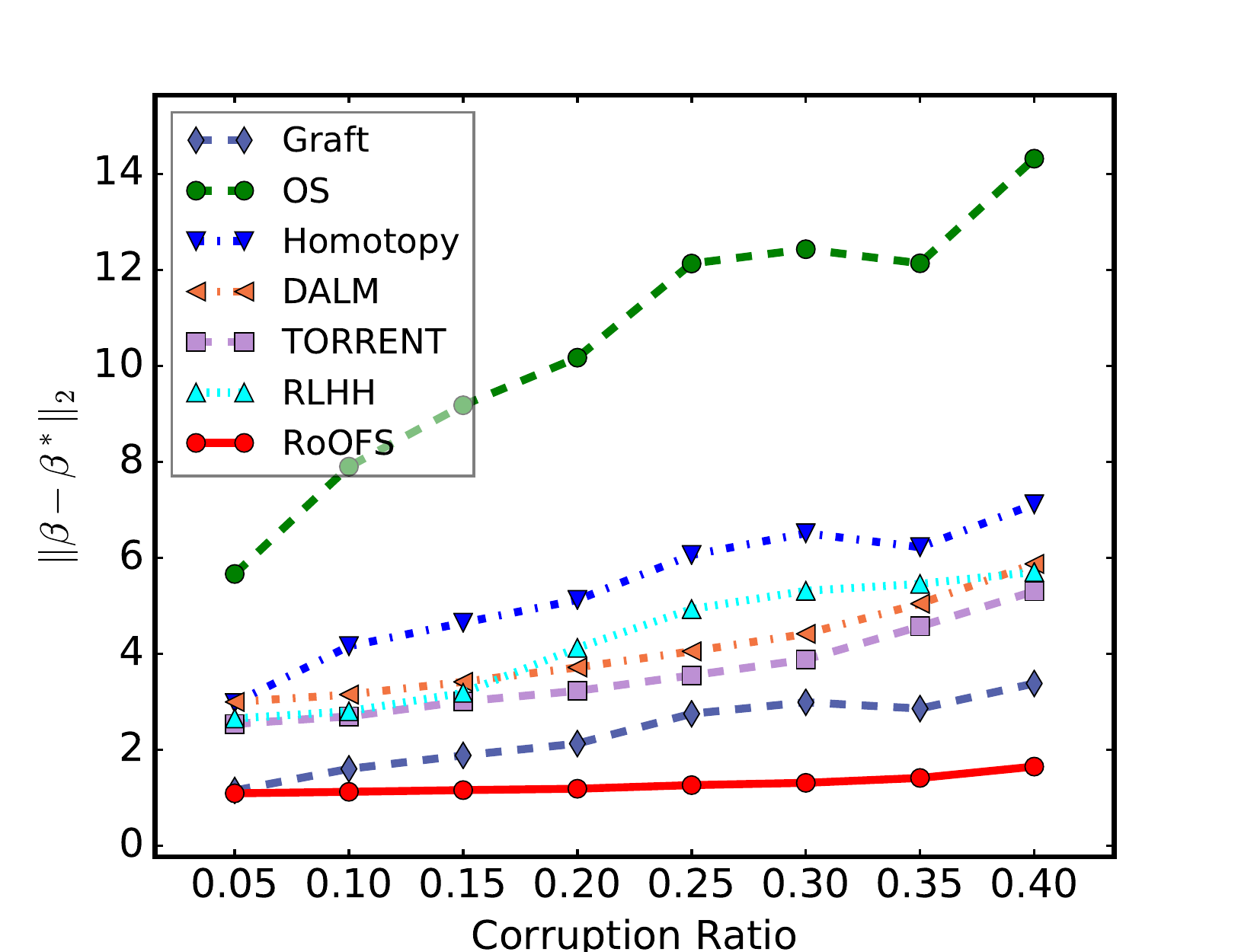}
	}
		\subfigure[p=2K, n=1K, $\mu$/$p$=20\%, no dense]{%
			\label{fig:beta_6}
			\includegraphics[trim=0.6cm 0.1cm 0.6cm 0.1cm,width=0.32\linewidth]{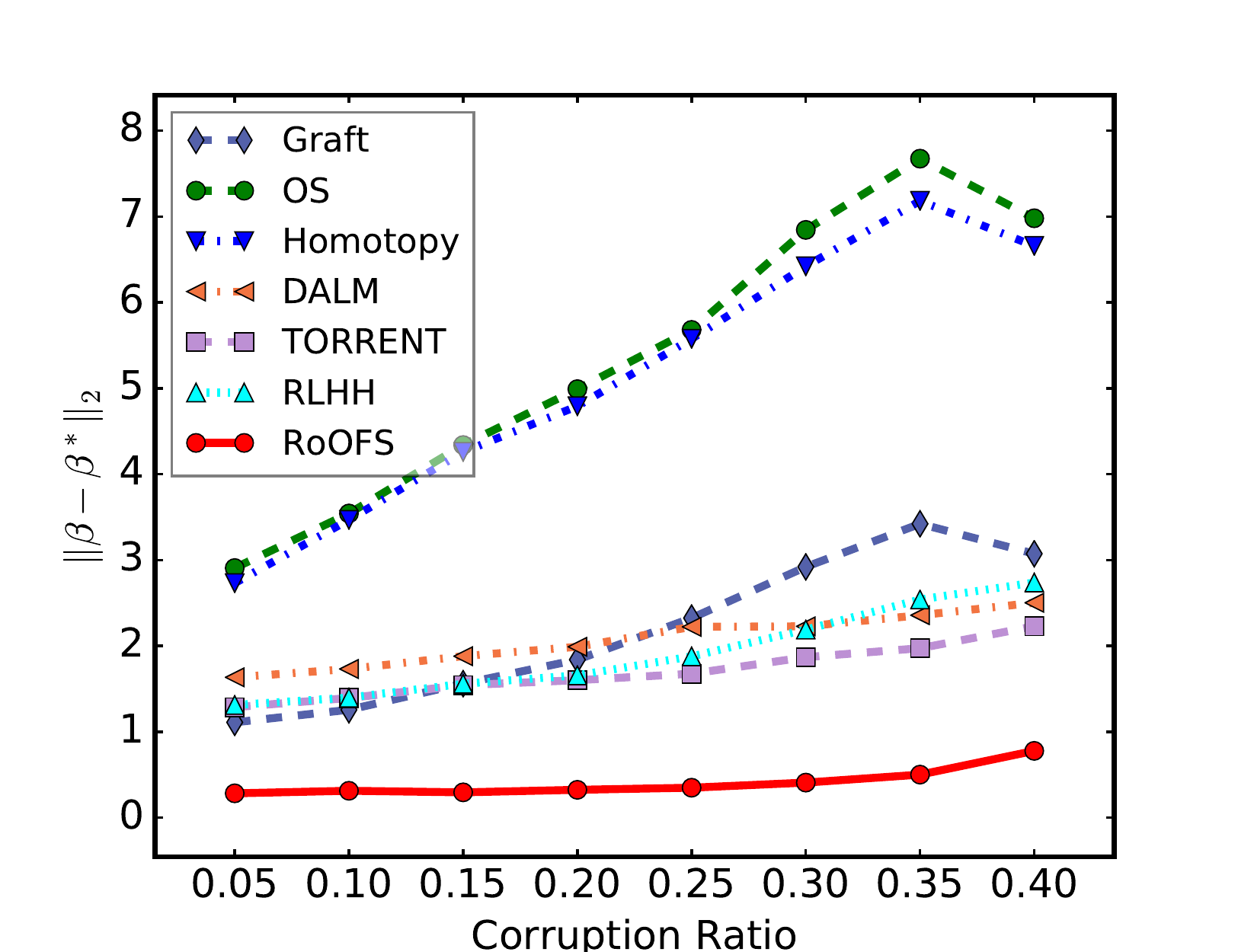}
	}
	
	\caption{
		\small Performance on regression coefficients recovery for different corruption ratios in uniform distribution.
	}%
	\label{fig:beta-cr}
\end{figure*}

\section{Experimental Results}\label{section:experiment}
In this section, we report the extensive experimental evaluation performed to verify the robustness, effectiveness of feature selection, and efficiency of the proposed method. All the experiments were conducted on a 64-bit machine with Intel(R) core(TM) quad-core processor (i7CPU@3.6GHz) and 32.0GB memory. Details of both the source code and sample data used in the experiment can be downloaded here\footnote{\url{https://goo.gl/C4HQjo}}.

\begin{table*}[h]
	\caption{F1 Scores for the Performance on Uncorrupted Set Recovery.}
	\centering
	\small
    \renewcommand\arraystretch{1.2}
	\label{table:uncorrupt_set}	
	\scalebox{1}{
		\begin{tabularx}{0.9\textwidth}{c *{13}{Y}}
			\toprule
			& \multicolumn{4}{c}{\textbf{p=2K, n=1K, $\mu$/$p$=20\% }}
			& \multicolumn{4}{c}{\textbf{p=2K, n=2K, $\mu$/$p$=20\%}}
			& \multicolumn{4}{c}{\textbf{p=4K, n=2K, $\mu$/$p$=20\%}} \\
			\cmidrule(lr){2-5} \cmidrule(lr){6-9} \cmidrule(lr){10-13}
			& 10\% & 20\% & 30\% & 40\% & 10\% & 20\% & 30\% & 40\% & 10\% & 20\% & 30\% & 40\%\\
			\midrule
			\textbf{Homotopy}	&0.980 & 0.912 & 0.849 & 0.682 & 0.977 & 0.923 & 0.854 & 0.834 &0.970 & 0.923 & 0.845 & 0.775\\	
			\textbf{DALM}		&0.976 & 0.915 & 0.865 & 0.825 & 0.973 & 0.921 & 0.885 & 0.924 &0.962 & 0.946 & 0.926 & 0.898\\	
			\textbf{TORR*}		&0.983 & 0.950 & 0.927 & 0.893 & 0.983 & 0.960 & 0.919 & 0.934 &0.978 & 0.954 & 0.934 & 0.916\\	
			\textbf{TORR25}		&0.965 & 0.899 & 0.842 & 0.762 & 0.961 & 0.909 & 0.828 & 0.770 &0.958 & 0.905 & 0.848 & 0.752\\				
			\textbf{RLHH}		&0.979 & 0.945 & 0.933 & 0.901 & 0.978 & 0.966 & 0.936 & 0.914 &0.980 & 0.959 & 0.940 & 0.896\\	
			\textbf{RoOFS}		&\textbf{0.991} & \textbf{0.986} & \textbf{0.974} & \textbf{0.933} & \textbf{0.993} & \textbf{0.991} & \textbf{0.976} & \textbf{0.946} &\textbf{0.993} & \textbf{0.988} & \textbf{0.975} & \textbf{0.923}\\		
			\bottomrule
	\end{tabularx}}	
	\scalebox{1}{
        \renewcommand\arraystretch{1.2}
		\begin{tabularx}{0.9\textwidth}{c *{13}{Y}}
			& \multicolumn{4}{c}{\textbf{p=2K, n=1K, $\mu$/$p$=60\%}}
			& \multicolumn{4}{c}{\textbf{p=2K, n=1K, $\mu$/$p$=20\% (nd)}}
			& \multicolumn{4}{c}{\textbf{p=4K, n=2K, $\mu$/$p$=20\% (nd)}} \\
			\cmidrule(lr){2-5} \cmidrule(lr){6-9} \cmidrule(lr){10-13}
			& 10\% & 20\% & 30\% & 40\% & 10\% & 20\% & 30\% & 40\% & 10\% & 20\% & 30\% & 40\%\\
			\midrule
			\textbf{Homotopy}	&0.979 & 0.932 & 0.829 & 0.708 &0.972 & 0.923 & 0.853 & 0.717 &0.985 & 0.913 & 0.868 & 0.789\\	
			\textbf{DALM}		&0.975 & 0.939 & 0.863 & 0.826 &0.965 & 0.910 & 0.886 & 0.842 &0.984 & 0.951 & 0.937 & 0.889\\	
			\textbf{TORR*}		&0.979 & 0.957 & 0.937 & 0.870 &0.974 & 0.950 & 0.935 & 0.896 &0.988 & 0.960 & 0.947 & 0.904\\	
			\textbf{TORR25}		&0.952 & 0.912 & 0.833 & 0.690 &0.952 & 0.911 & 0.859 & 0.758 &0.968 & 0.908 & 0.864 & 0.745\\	
			\textbf{RLHH}		&0.975 & 0.959 & 0.928 & 0.845 &0.973 & 0.959 & 0.935 & 0.907 &0.983 & 0.965 & 0.940 & 0.912\\	
			\textbf{RoOFS}		&\textbf{0.982} & \textbf{0.984} & \textbf{0.962} & \textbf{0.910} &\textbf{0.989} & \textbf{0.993} & \textbf{0.985} & \textbf{0.947} &\textbf{0.994} & \textbf{0.991} & \textbf{0.988} & \textbf{0.933}\\
			\bottomrule
	\end{tabularx}}	
\end{table*}

\subsection{Datasets and Metrics}
To demonstrate the performance of our proposed method, comprehensive experiments are performed in synthetic datasets whose simulation samples were randomly generated according to the model in Equation \eqref{eq:model}. Specifically, we sample the regression coefficients $\bm \beta^* \in \mathbbm{R}^p$ as a random unit norm vector with feature ratio constraint $\norm{\bm \beta}_0 = \mu$. The data matrix $X$ was drawn independently and identically distributed from $\bm x_i \sim \mathcal{N}(\bm 0, I_p)$ and the uncorrupted response variables were generated as $y_i^* = \bm x_i^T \bm \beta^*$. The set of uncorrupted samples $S$ was selected as a uniformly random $\tau_*$-sized subset of $[n]$. The response vector $\bm y$ containing corrupted samples was generated as $\bm y = \bm y^* + \bm u +\bm \varepsilon$, where the corruption vector $\bm u$ was sampled from the uniform distribution $\big [-5\|\bm y^*\|_\infty, 5\|\bm y^*\|_\infty \big]$ and the additive dense noise was $\varepsilon_i \sim \mathcal{N}(0, \sigma^2)$.
For the real-world data set, we applied our methods on the IMDb reviews data set for the review score prediction. The data set contains 50,000 popular movie reviews with the review score from 1 to 10 provided by the IMDb website. The adversarial data corruption vector $\bm u$ was appended to its original review score, where $\bm u$ was also sampled from the range $\big [-5\|\bm y^*\|_\infty, 5\|\bm y^*\|_\infty \big]$ randomly.

Following the setting in \cite{bhatia2015robust}\cite{ijcai2017-480}, we measured the performance of the regression coefficients recovery using the standard $L_2$ error $e = \norm{\hat{\bm \beta} - \bm \beta^*}_2$, where $\hat{\bm \beta}$ represents the recovered coefficients for each method and $\bm \beta^*$ is the true regression coefficients. To validate the performance for corrupted set discovery, the F1 score is measured by comparing the discovered corrupted sets with the actual ones. Similarly, the F1 score is also used to measure the effectiveness of feature selection by comparing the selected feature set with actual ones. To compare the scalability of each method, the CPU running time for each of the competing methods was also measured.

\begin{figure*}[ht]
	\centering
		\subfigure[p=2K, corruption ratio=20\%]{%
			\label{fig:beta-fr_1}
			\includegraphics[trim=0.6cm 0.1cm 0.6cm 0.1cm,width=0.32\linewidth]{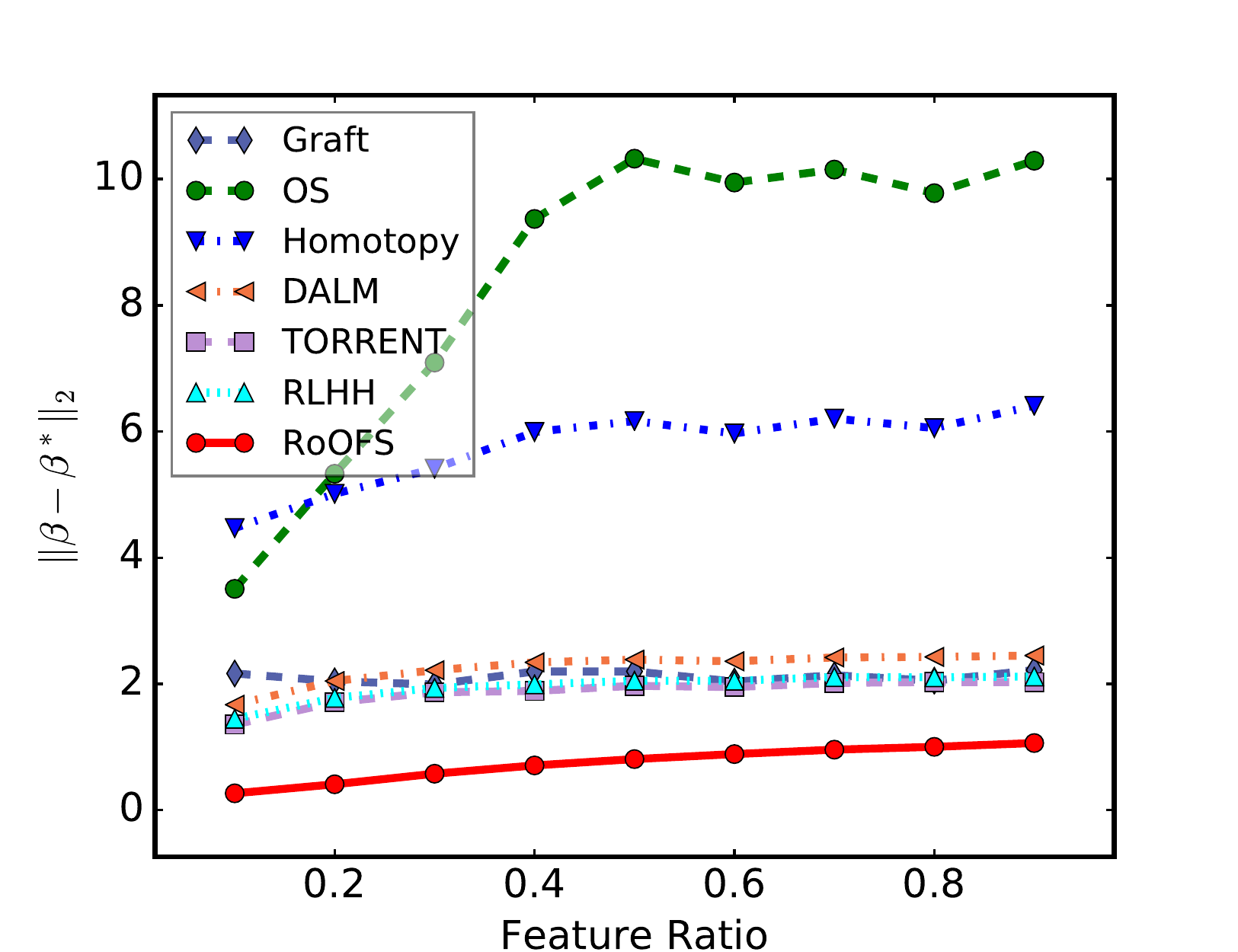}
	}
		\subfigure[p=4K, corruption ratio=20\%]{%
			\label{fig:beta-fr_2}
			\includegraphics[trim=0.6cm 0.1cm 0.6cm 0.1cm,width=0.32\linewidth]{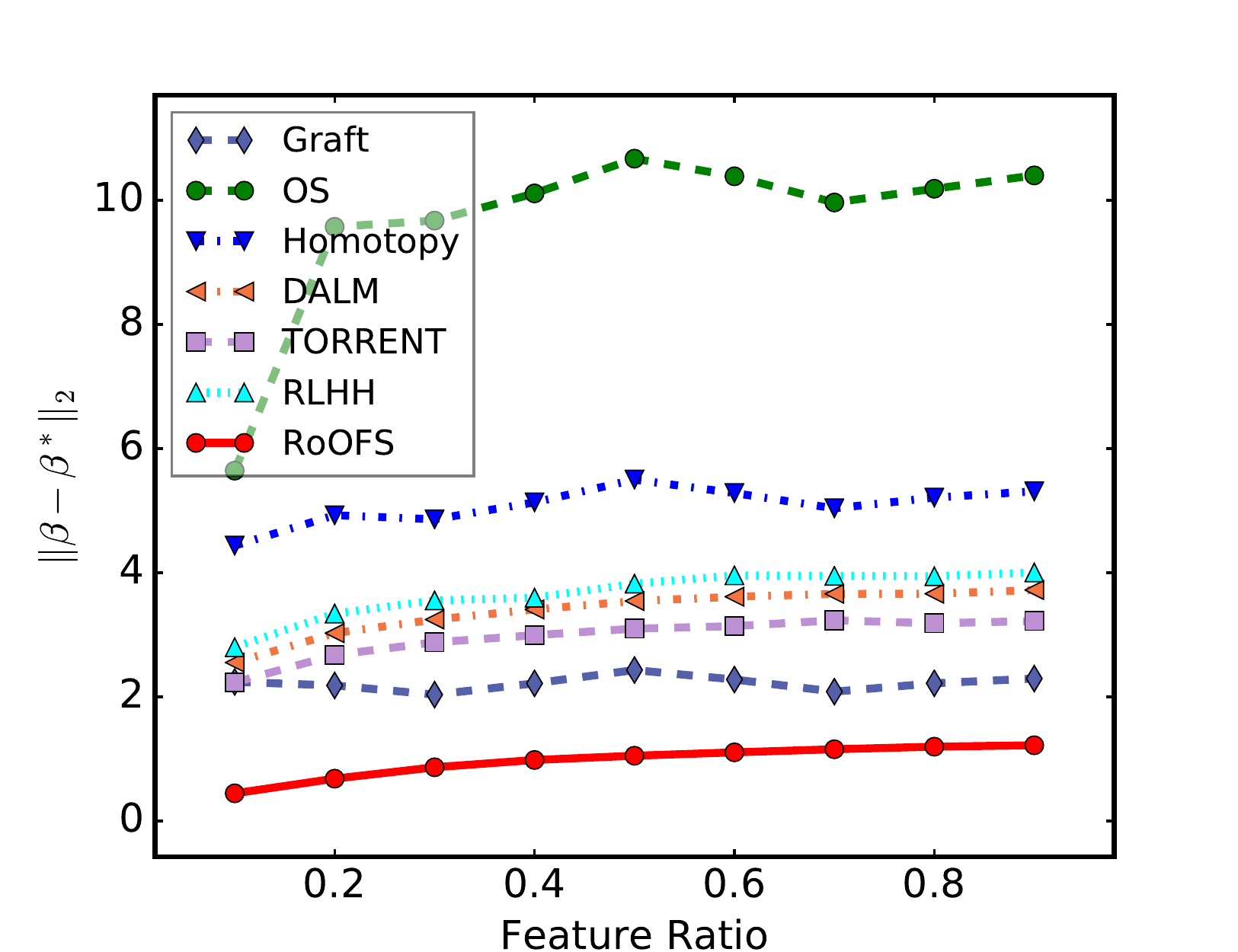}
	}
		\subfigure[p=2K, corruption ratio=40\%]{%
			\label{fig:beta-fr_3}
			\includegraphics[trim=0.6cm 0.1cm 0.6cm 0.1cm,width=0.32\linewidth]{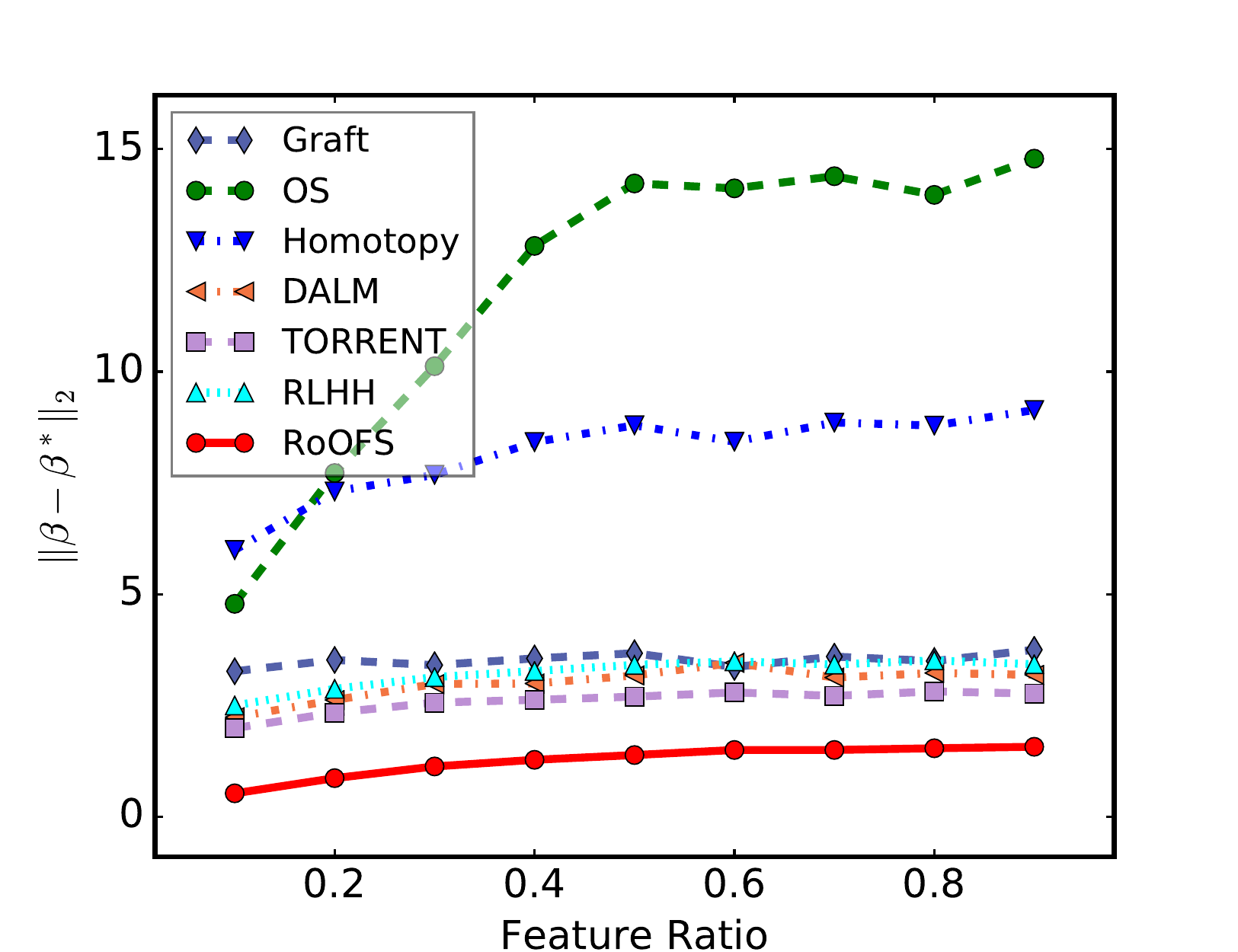}
	}
	
	\caption{%
		\small Performance on regression coefficients recovery for different ratios of feature ratio (n=1K, dense noise).
	}
	\label{fig:beta-fr}
\end{figure*}

\subsection{Comparison Methods}
The following methods are included in the performance comparison presented here: \textit{Grafting} \cite{Perkins:2003:OFS:3041838.3041913}. The \textit{Grafting} method is an online version of $L_1$ regularization approach to selects features. \textit{Online Substitution} (\textit{OS}) \cite{Yang:2016:OFS:2939672.2939881} is a parameter-free online feature selection algorithm with limited-memory. Both \textit{Grafting} and \textit{OS} cannot handle the adversarial data corruption and train models without considering data corruption. We also compared our method to the robust regression methods \cite{Wright:2010:DEC:1840493.1840533} \cite{nguyen2013exact}. \textit{Homotopy} and \textit{DALM} are two $L_1$ based solvers that outperform other $L_1$ methods both in terms of recovery properties and running time \cite{Yang:EECS-2010-13}. A hard thresholding method, \textit{TORRENT (abbreviated "TORR")} \cite{bhatia2015robust}, developed for robust regression was also compared to our method. As the method requires a parameter for the corruption ratio, which is difficult to estimate in practice, we chose two versions of parameter settings: \textit{TORR*} and \textit{TORR25}. \textit{TORR*} uses the true corruption ratio as its parameter, and \textit{TORR25} applies parameter that is uniformly distributed across the range of $\pm 25\%$ off the true value. Another recently proposed heuristic hard thresholding method, \textit{RLHH} \cite{ijcai2017-480}, is also compared in our experiment. The method is a parameter-free approach, where the data corruption is estimated by a heuristic hard thresholding method. As all these robust methods are not designed for online feature selection, we run them individually in different feature batches and select features with largest $\mu$ weights in regression coefficients when $\norm{\bm \beta}_0 = \mu$. 

\subsection{Recovery of regression coefficients}
We selected 6 competing methods with which to evaluate the recovery performance of regression coefficients $\bm \beta$: \textit{Grafting}, \textit{OS}, \textit{Homotopy}, \textit{DALM}, \textit{TORR}, \textit{RLHH}. 
Figures \ref{fig:beta_1} and \ref{fig:beta_2} show the recovery performance for different feature numbers when the data size is fixed. The results show that 1) the proposed method, \textit{RoOFS}, outperforms all the competing methods in all the setting of corruption ratios, and 2) The performance of \textit{RoOFS} is very resistant to the corruption data because the error of \textit{RoOFS} method increases much more slowly than others when corruption ratio increases from 5\% to 40\%. Figures \ref{fig:beta_2} and \ref{fig:beta_3} show that when data size increases, we have similar conclusion on the performance except the overall error is decreased since more data is applied. Figures \ref{fig:beta_4} and \ref{fig:beta_5} show that the result of coefficient recovery remains the same when the number of selected features increase. Figure \ref{fig:beta_6} shows that almost all the methods without the dense noise setting perform more than 50\% better than that in dense noise settings. Specifically, the error of \textit{RoOFS} is close to zero which means it can almost exactly recover the ground true regression coefficients without the dense noise setting.

Figure \ref{fig:beta-fr} shows the recovery performance of regression coefficients in different ratios of feature sparsity. In general, the performance of \textit{RoOFS} method outperforms all the other competing methods in all the data settings. Figure \ref{fig:beta-fr_1} and \ref{fig:beta-fr_2} show that 1) when the feature ratio increases, the recovery error of \textit{RoOFS} method grows linearly with a small slope, which means our approach can be fitted into different settings of feature sparsity, and 2) the \textit{RoOFS} method performs constantly well when the feature number increases from 2K to 4K. In addition, Figure \ref{fig:beta-fr_1} and \ref{fig:beta-fr_3} show that the \textit{RoOFS} method is robust to corrupted data, because the error is not significantly impacted when the corruption ratio increases from 20\% to 40\%.

\subsection{Recovery of Uncorrupted Set}
As the online feature selection methods \textit{Grafting} and \textit{OS} do not explicitly estimate uncorrupted sets, we compared our proposed method with the robust methods: \textit{Homotopy}, \textit{DALM}, \textit{TORR}, and \textit{RLHH}. For the \textit{TORR} algorithm, we use two parameter settings of \textit{TORR*} and \textit{TORR25} for 0\% and 25\% deviation of true corrupted ratio, respectively. Table \ref{table:uncorrupt_set} shows the following: 1) \textit{RoOFS} outperforms all the other methods up to 14.9\% in different settings of data sizes, feature numbers and ratios of feature sparsity. 2) When increasing the corruption ratio, the F1 scores decrease for all the methods. Also, the F1 scores slightly increase 0.5\% in average when data size become two times larger, which indicates the number of features has few influence on the estimation of uncorrupted set. 3) The result of \textit{TORR} methods is highly dependent on the corruption ratio parameter: the results of \textit{TORR*} is up to 26.1\% better than \textit{TORR25}. It is important to note that the true corruption ratio parameter used in \textit{TORR*} cannot be estimated exactly in practice. 4) Without the dense noise settings, the F1 scores increase less than 1\% compared to the F1 score based on the same setting with dense noise, which shows that dense noise has small impact on the performance of uncorrupted set recovery.

\begin{figure}[t]
	\centering	
	\scalebox{1.16}{
		\subfigure[p=2K, $\mu$/$p$=20\%, cr=10\%]{%
			\label{fig:runtime-ds}
			\includegraphics[trim=0.6cm 0.1cm 0.6cm 0.1cm,width=0.7\linewidth]{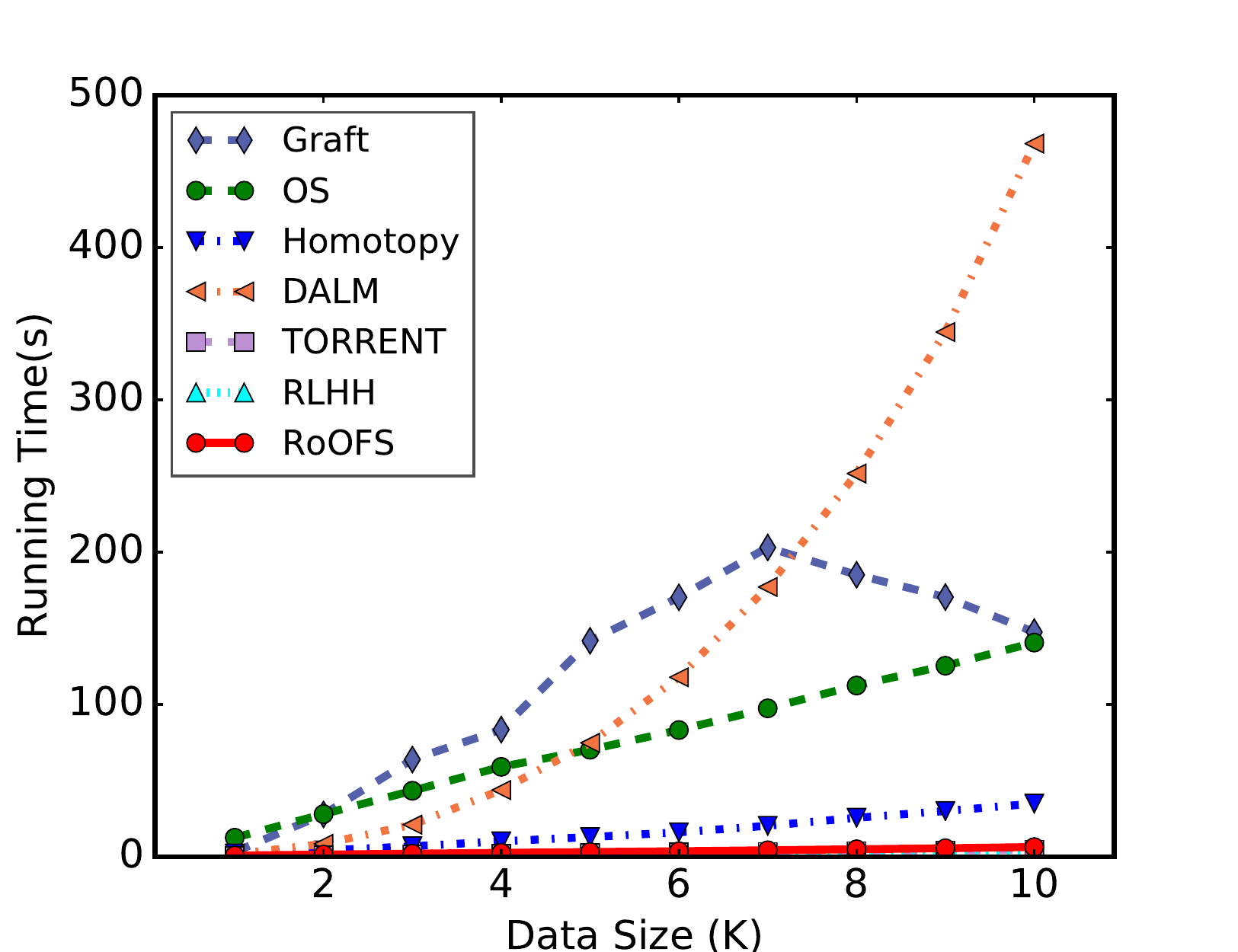}
	}}

	\scalebox{1.16}{
		\subfigure[n=1K, $\mu$/$p$=20\%, cr=10\%]{%
			\label{fig:runtime-fn}
			\includegraphics[trim=0.6cm 0.1cm 0.6cm 0.1cm,width=0.7\linewidth]{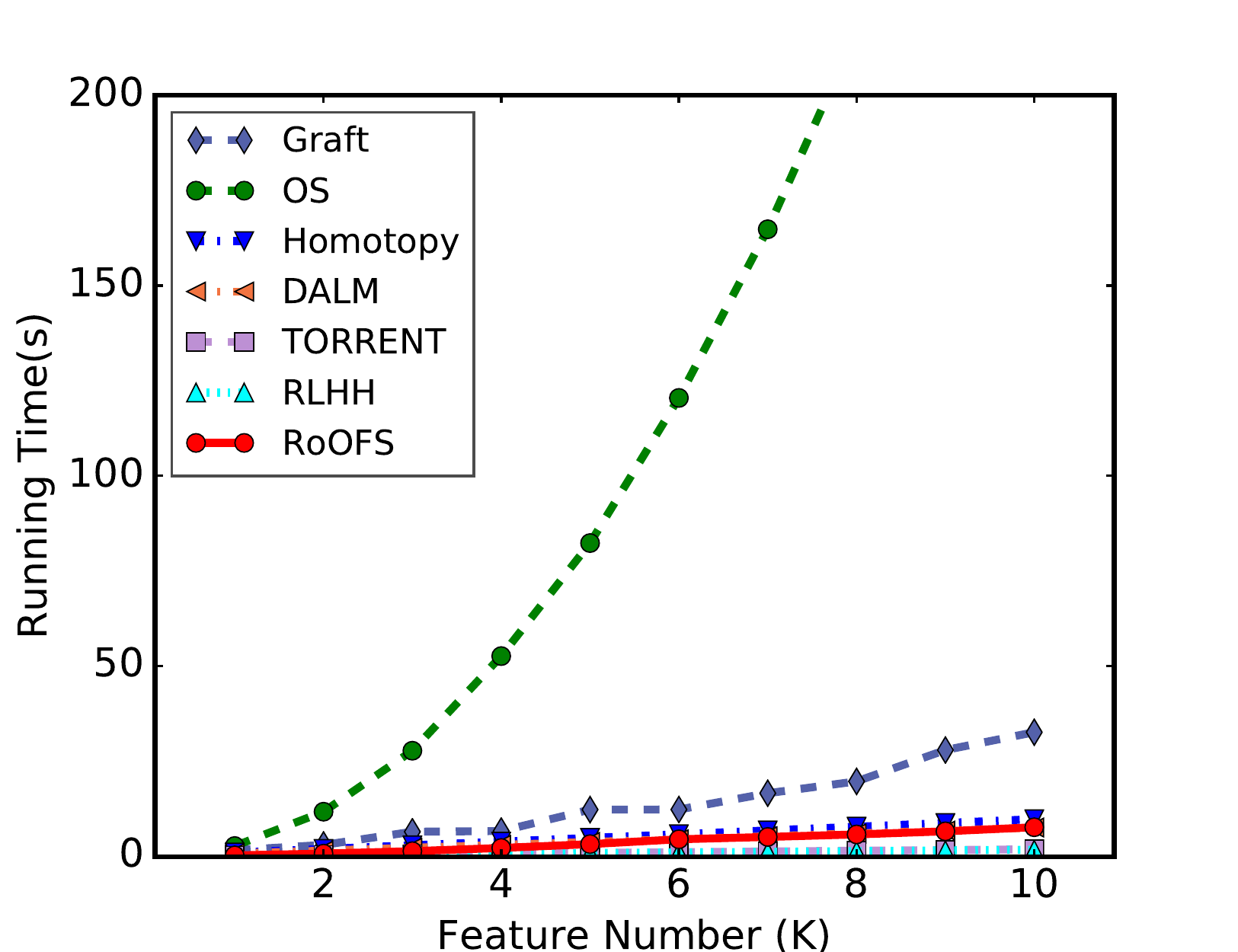}
	}}
	
	\caption{
		\small Running time for different data and feature sizes.
	}%
	\label{fig:runtime}
\end{figure}

\subsection{Performance of Feature Selection}
We selected all the six competing methods to evaluate the performance of feature selection in different settings including data sizes, feature numbers, and dense noises. For each data setting, we chose different ratios of feature sparsity (also known as $\mu$/$p$) ranging from 10\% to 60\%. Table \ref{table:feature_selection} shows the following: 1) the F1 scores of \textit{RoOFS} method is up to 69.2\% better than other methods, especially when the feature ratio is less than 40\%. 2) Although the F1 scores of most methods such as \textit{Grafting} and \textit{TORR} are above 0.6 when the ratio is larger than 50\%, the performance degraded significantly when the ratio decreased to 10\%. However, the F1 score of \textit{RoOFS} method is constantly higher than 0.85 in all the ratios of features. 3) \textit{OS} method is very competitive in the task of feature selection; however, it still has lower F1 scores in all the settings when the ratio is less than 50\%. 4) The setting of dense noise does not have significant impact on the performance of feature selection, since the F1 score without dense noise is only less than 1\% larger than that in dense noise setting.

\begin{table*}[t]
	\caption{F1 Score on Performance of Feature Selection (cr=30\%).}
	\centering
	\small
    \renewcommand\arraystretch{1.2}
	\label{table:feature_selection}	
		\begin{tabularx}{0.9\textwidth}{c *{13}{Y}}
			\toprule
			& \multicolumn{6}{c}{\textbf{p=10K, n=10K, dense }}
			& \multicolumn{6}{c}{\textbf{p=20K, n=10K, dense}} \\
			\cmidrule(lr){2-7} \cmidrule(lr){8-13}
			& 10\% & 20\% & 30\% & 40\% & 50\% & 60\% & 10\% & 20\% & 30\% & 40\% & 50\% & 60\%\\
			\midrule
			\textbf{Grafting}		&0.130 & 0.201 & 0.302 & 0.543 & 0.759 & 0.789 & 0.111 & 0.399 &0.642 & 0.773 & 0.844 & 0.895\\	
			\textbf{OS}			&0.706 & 0.649 & 0.626 & 0.643 & 0.810 & 0.678 & 0.611 & 0.611 & 0.606 & 0.689 & 0.836 & \textbf{0.975}\\	
			\textbf{Homotopy}	&0.116 & 0.217 & 0.304 & 0.406 & 0.498 & 0.667 & 0.109 & 0.202 & 0.417 & 0.625 & 0.750 & 0.833\\	
			\textbf{DALM}		&0.130 & 0.219 & 0.309 & 0.418 & 0.516 & 0.597 & 0.114 & 0.215 & 0.390 & 0.404 & 0.502 & 0.603\\	
			\textbf{TORR}		&0.275 & 0.297 & 0.368 & 0.446 & 0.525 & 0.647 & 0.320 & 0.338 & 0.395 & 0.458 & 0.535 & 0.623\\	
			\textbf{RLHH}		&0.193 & 0.261 & 0.338 & 0.416 & 0.510 & 0.647 & 0.322 & 0.336 & 0.390 & 0.461 & 0.536 & 0.628\\	
			\textbf{RoOFS}		&\textbf{0.911} & \textbf{0.910} & \textbf{0.876} & \textbf{0.870} & \textbf{0.895} & \textbf{0.891} & \textbf{0.876} & \textbf{0.842} &\textbf{0.832} & \textbf{0.848} & \textbf{0.881} & 0.906\\	

            \bottomrule
	   \end{tabularx}
        \renewcommand\arraystretch{1.2}
		\begin{tabularx}{0.9\textwidth}{c *{13}{Y}}
			& \multicolumn{6}{c}{\textbf{p=10K, n=5K, dense}}
			& \multicolumn{6}{c}{\textbf{p=10K, n=10K, no dense}} \\
			\cmidrule(lr){2-7} \cmidrule(lr){8-13}
			& 10\% & 20\% & 30\% & 40\% & 50\% & 60\% & 10\% & 20\% & 30\% & 40\% & 50\% & 60\%\\
			\midrule
			\textbf{Grafting}		& 0.114 & 0.206 & 0.426 & 0.641 & 0.765 & 0.836 & 0.142 & 0.224 & 0.293 & 0.527 & 0.698 & 0.797\\	
			\textbf{OS}			& 0.657 & 0.596 & 0.618 & 0.688 & 0.840 & \textbf{0.961} & 0.688 & 0.682 & 0.647 & 0.649 & 0.655 & 0.688\\	
			\textbf{Homotopy}	& 0.107 & 0.207 & 0.304 & 0.395 & 0.500 & 0.667 & 0.126 & 0.203 & 0.307 & 0.405 & 0.500 & 0.667\\	
			\textbf{DALM}		& 0.113 & 0.210 & 0.311 & 0.396 & 0.504 & 0.602 & 0.140 & 0.226 & 0.308 & 0.407 & 0.504 & 0.609\\	
			\textbf{TORR}		& 0.312 & 0.336 & 0.391 & 0.461 & 0.532 & 0.627 & 0.475 & 0.448 & 0.472 & 0.521 & 0.579 & 0.646\\	
			\textbf{RLHH}		& 0.314 & 0.334 & 0.388 & 0.463 & 0.530 & 0.624 & 0.476 & 0.458 & 0.487 & 0.531 & 0.585 & 0.646\\	
			\textbf{RoOFS}		& \textbf{0.873} & \textbf{0.830} &\textbf{0.837} & \textbf{0.859} & \textbf{0.883} & 0.909 & \textbf{0.917} & \textbf{0.922} &\textbf{0.898} & \textbf{0.900} & \textbf{0.889} & \textbf{0.900}\\		
			\bottomrule
	\end{tabularx}
\end{table*}

\subsection{Performance in real-world data}
To evaluate the robustness of our proposed methods in a real-world dataset, we compared the performance of sentiment prediction in different corruption settings, ranging from 5\% to 40\%. The dataset was first proposed by Maas et al. \cite{Maas2011Learning} as a benchmark for sentiment analysis. It consists of movie reviews taken from IMDB. One key aspect of this dataset is that each movie review has several sentences. The 100,000 movie reviews are divided into three datasets: 25,000 labeled training instances, 25,000 labeled test instances and 50,000 unlabeled training instances. The unlabeled data were designed as the additional corruption to the dataset: the score of sentiment were random number between one to ten.
Table \ref{table:sentiment prediction} shows the mean absolute error of sentiment prediction in the IMDB datasets.
From the result, we can conclude: 
1) \textit{RoOFS} method outperform all the other methods in different corruption settings. 
2) Although the absolute error of the other methods such as \textit{Homotopy} and \textit{DALM} are above 4, the performance varied significantly when the ratio changed because these methods highly dependent on the parameters and it's hard to estimate the feature sparsity ratio and true corruption ratio in the real-world data. However, the performance of \textit{RoOFS} method is constantly above 3.10 in all the ratios of corruption.
3) It is true that \textit{OS} has a very competitive performance in all the corruption settings because the deviation of corruption is small, which is less than 50\% from the labeled data. But the running time of \textit{OS} is too high to train the data which has 10k features.
4) When increasing the corruption ratio, the absolute error of \textit{RoOFS} method decreased. 

\begin{table}[t]
	\caption{Mean Absolute Error of Sentiment Prediction.}
	\centering
	\small
    \renewcommand\arraystretch{1.2}
	\label{table:sentiment prediction}	
		\begin{tabularx}{0.96\linewidth}{c *{13}{Y}}
			\toprule
			& \multicolumn{6}{c}{\textbf{p=10K, n=10K}}\\
			\cmidrule(lr){2-7} \cmidrule(lr){8-13}
			& 5\% & 10\% & 20\% & 30\% & 40\% & Avg \\
			\midrule
			\textbf{Grafting}		&3.162 & 3.162 & 3.162 & 3.162 & 3.162 & 3.162 \\	
			\textbf{OS}			&3.111 & 3.078 & 3.078 & 3.113 & 3.108 & 3.098 \\	
			\textbf{Homotopy}	&4.157 & 3.925 & 3.894 & 3.828 & 3.540 & 3.869 \\	
			\textbf{DALM}		&3.573 & 3.311 & 3.523 & 3.459 & 3.252 & 3.424 \\	
			\textbf{TORR}		&5.090 & 3.928 & 4.596 & 5.147 & 4.218 & 4.596 \\	
			\textbf{RLHH}		&5.448 & 4.198 & 3.716 & 4.269 & 4.626 & 4.451 \\	
			\textbf{RoOFS}		&\textbf{3.074} & \textbf{3.073} & \textbf{3.072} & \textbf{3.070} & \textbf{3.066} & \textbf{3.071} \\	

            \bottomrule
	   \end{tabularx}
\end{table}

\subsection{Efficiency}
To evaluate the efficiency of our proposed method, we compared the performances of all the competing methods for two difference settings: data sizes and feature numbers. For \textit{Grafting} and \textit{OS} methods, the online features are handled individually due to their design. For the other methods, a hundred features are handled together as a batch. As Figure \ref{fig:runtime} shows, we found the following: 1) \textit{RoOFS} algorithm has a very competitive efficiency compared to the thresholding based methods, \textit{TORR} and \textit{RLHH}, and significantly outperforms other four methods. 2) The running time of \textit{RoOFS} algorithm increases linearly when both data size and feature number increase, which indicates that our algorithm can be scaled to massive datasets. 3) The running time of \textit{DALM} method increases exponentially when data size increases, however, its efficiency has rarely impacted by increasing the number of features. 4) The efficiency of \textit{Grafting} method fluctuates largely on the different data sizes, which indicates that its running time depends on the data size and content of data.

\section{Conclusion}\label{section:conclusion}
In this paper, a novel robust regression algorithm via online feature selection, \textit{RoOFS}, is proposed to recover the regression coefficients and the uncorrupted set under the assumption that features cannot be accessed entirely at one time. To achieve this, we designed a robust online substitution method to alternately estimate the optimal uncorrupted set and substitute the retained feature set with newly updated features. We demonstrate that our algorithm can recover regression coefficients with a restricted error bound compared to ground truth. Extensive experiments on massive simulation data demonstrated that the proposed algorithm outperforms other competing methods in both effectiveness and efficiency.

\bibliographystyle{IEEEtran}
\bibliography{icdm18_roofs}

\end{document}